\documentclass{article}
\usepackage[utf8]{inputenc} 
\usepackage[T1]{fontenc}    
\usepackage{hyperref}       
\usepackage{url}            
\usepackage{booktabs}       
\usepackage{amsfonts}       
\usepackage{nicefrac}       
\usepackage{microtype}      
\usepackage{amsmath}
\usepackage[normalem]{ulem}
\usepackage[dvipsnames]{xcolor}
\usepackage{multirow}
\usepackage{array}
\usepackage{lmodern}

\usepackage{algorithm}
\usepackage{algpseudocode}

\usepackage[shortlabels]{enumitem}
\usepackage[font=small,labelfont=bf]{caption}
\usepackage{subcaption}
\usepackage[percent]{overpic}
\usepackage{natbib}
\usepackage[margin=1.2in]{geometry}
\usepackage{amssymb, amsmath}
\usepackage{amsthm}
\usepackage{dsfont}
\usepackage{stmaryrd}
\usepackage{caption}

\usepackage{algorithm}
\usepackage{algpseudocode}

\setlength{\marginparwidth}{23mm}
\usepackage[linecolor=blue!60!,backgroundcolor=blue!10!,textwidth=2.5cm,textsize=scriptsize]{todonotes}

\usepackage{color} 

\definecolor{airforceblue}{rgb}{0.36, 0.54, 0.66}

\title{
\vspace{-3.5em}
\noindent\rule[0.5ex]{\linewidth}{1pt} On Volume Minimization in Conformal Regression  \noindent\rule[0.5ex]{\linewidth}
{1pt}\vspace{-1em}}
\author{Batiste Le Bars\textsuperscript{\textdagger} \qquad Pierre Humbert*  \\[0.5cm]
\textsuperscript{\textdagger} Univ. Lille, Inria, CNRS, Centrale Lille, UMR 9189, CRIStAL, F-59000 Lille\\[0.1cm]
* Sorbonne Université et Université Paris Cité, CNRS, Laboratoire de Probabilités, \\Statistique et
Modélisation, F-75005 Paris, France \\[0.1cm]}
\date{}

\newcommand\method{{\texttt{EffOrt}}}
\newcommand\methodAD{{\texttt{Ad-EffOrt}}}

\newcommand\phiF{{\phi(\calF,\delta,n)}}
\newcommand\phiFS{{\phi(\calF,\calS,\delta,n)}}
\newcommand\phiFl{{\phi(\calF,\delta,n_\ell)}}
\newcommand\phiFSl{{\phi(\calF,\calS,\delta,n_\ell)}}
\newcommand\psiS{{\psi(\calS,\delta,n)}}
\newcommand\psiSl{{\psi(\calS,\delta,n_\ell)}}

\newcommand{\calD}{{\cal D}}
\newcommand\calX{{\cal X}}
\newcommand\calY{{\cal Y}}

\newcommand\calE{{\cal E}}

\newcommand\calS{{\cal S}}
\newcommand\calO{{\cal O}}
\newcommand\calR{{\cal R}}
\newcommand\calC{{\cal C}}
\newcommand\calB{{\cal B}}
\newcommand\calN{{\cal N}}
\newcommand\calZ{{\cal Z}}
\newcommand\calU{{\cal U}}
\newcommand\calT{{\cal T}}

\newcommand\fh{{\widehat{f}}}
\newcommand\IE{{\mathbb{E}}}
\newcommand\IR{{\mathbb{R}}}

\newcommand\IP{{\mathbb{P}}}
\newcommand\Chat{{\widehat{C}}}

\newcommand{\intset}[1]{\llbracket #1 \rrbracket}



\DeclareMathOperator*{\argmin}{arg\,min}

%
%




\newcommand{\hf}{\hat{f}}
\newcommand{\hs}{\hat{s}}

\newcommand{\EE}{\mathbb{E}}

\newcommand{\calF}{\mathcal{F}}

\newcommand{\1}{\mathbf{1}}


\newtheorem{remark}{Remark}
\newtheorem{theorem}{Theorem}
\newtheorem{assumption}{Assumption}

\newtheorem{proposition}{Proposition}
\newtheorem{corollary}{Corollary}
\newtheorem{exemple}{Example}

\newtheorem{lemma}{Lemma}

\setlength{\parindent}{0pt} 
\setlength{\parskip}{0.5em}

\begin{document}

\maketitle

\begin{abstract}
    We study the question of volume optimality in split conformal regression, a topic still poorly understood in comparison to coverage control. Using the fact that the calibration step can be seen as an empirical volume minimization problem, we first derive a finite-sample upper-bound on the excess volume loss of the interval returned by the classical split method. This important quantity measures the difference in length between the interval obtained with the split method and the shortest oracle prediction interval.
    Then, we introduce~\method, a methodology that modifies the learning step so that the base prediction function is selected in order to minimize the length of the returned intervals. 
    In particular, our theoretical analysis of the excess volume loss of the prediction sets produced by~\method~reveals the links between the learning and calibration steps, and notably the impact of the choice of the function class of the base predictor. We also introduce \methodAD, an extension of the previous method, which produces intervals whose size adapts to the value of the covariate. Finally, we evaluate the empirical performance and the robustness of our methodologies.
\end{abstract}


\section{Introduction}

Conformal Prediction (CP) \citep{vovk2005algorithmic} has recently been considered as one of the state-of-art technique to construct distribution-free prediction sets satisfying probabilistic coverage guarantees. Formally, consider a random variable $(X,Y)\in\calX \times \calY$ and some coverage level $\alpha\in[0,1]$, CP techniques construct a set-valued function $C:\calX \rightarrow 2^\calY$ such that: 
\begin{equation}
    \label{eq:conform-obj}
    \IP(Y\in C(X)) \geq 1-\alpha\;.
\end{equation}
This is particularly useful when the user prefers to be confident with the range of values that $Y$ can take, rather than having only a single predicted scalar value. In Section~\ref{sec:conform-background}, we give a short reminder on CP, and on the most important techniques to construct $C$ satisfying Eq.~\eqref{eq:conform-obj}. While these techniques are completely distribution-free, making them quite powerful in practice, they still suffer from an important limitation: how can we be sure that the trivial prediction set $C(x) = \calY$ is not returned? Indeed, this prediction set necessarily satisfy the condition \eqref{eq:conform-obj}. To prevent this, theoretical analyses of CP methods typically include an upper-bound on the probability of coverage $\IP(Y\in C(X))$. Such upper-bound tends to $1-\alpha$ as the number of sample used to build $C$ grows, which somehow reflects that the prediction set cannot be the full support of $Y$. However, this is still insufficient as one may take $C(x) = \calY$ with probability $1-\alpha$ and $C(x) = \emptyset$ with probability $\alpha$, resulting in a coverage exactly equal to $1-\alpha$, but with an expected size of $(1-\alpha)|\calY|$. Here, $|\calY|$ denotes the size of $\calY$ and will typically be infinite in regression settings where $\calY=\IR$. Such a set is too large and therefore highly uninformative. Hence, the CP literature suggests to also look at the size of the predicted sets to measure the performance of CP methods. The smaller is a prediction set, the more \textit{efficient} it is considered. However, most works do this analysis empirically, while very few has been focusing on the statistical control of the size of CP sets. \looseness = -1

In this paper, we therefore propose to study when $C(x)$ is in fact a solution of an optimization problem of the form:
%
\begin{align}
	&\min_{C} \; \IE[\mu(C(X))] \label{eq:informal-opt} \\
	&\text{s.t.} \quad \IP(Y\in C(X)) \geq 1-\alpha \;, \nonumber
\end{align}
where $\mu$ is a measure of the volume of the set $C(x)$, typically the Lebesgue measure in regression problems, or the counting measure in classification. This optimization problem ensures that among all prediction sets $C(x)$ that satisfy the coverage condition~\eqref{eq:conform-obj}, the volume of the returned set is also of minimal size. Looking at problem~\eqref{eq:informal-opt} instead of~\eqref{eq:conform-obj} alone is therefore more meaningful as it encapsulates the two key aspects of CP: coverage and efficiency. 

\subsection{Main contributions}

\begin{itemize}[leftmargin=*]
    \item After describing the problem in Section~\ref{sec:problem-statement}, in Section~\ref{sec:constant} we restrict the prediction sets to be intervals of constant size, and show in Section~\ref{sec:f-given} that the calibration step of split CP solves an empirical version of problem~\eqref{eq:informal-opt}. This allows us to derive a \textbf{finite-sample bound on the excess volume loss} of the returned prediction set, namely on the volume difference between the learned and the oracle prediction sets.
    \item We then argue that for the learning step to be \emph{efficiency-oriented}, the prediction function should minimize the $(1-\alpha)$-quantile of the absolute error. This motivates~\method, a new split CP approach that finds an empirical minimizer of such quantile. In Theorem~\ref{thme:main-constant}, an \textbf{excess volume bound shows the joint impact of the learning and the calibration step}, supporting the intuition that more data-points should be dedicated to the learning step.
    \item In Section~\ref{sec:adaptive}, we increase the class of prediction sets to intervals with length adaptive to the covariates value, and present~\methodAD, an extension of the previous method. Finally, in Section~\ref{sec:xps}, a set of synthetic data experiments illustrates the empirical performance and the \textbf{robustness of our approaches} on asymmetric and heavy-tailed distributions.
\end{itemize}


\section{Background}

\subsection{Preliminaries on Split Conformal Prediction}

\label{sec:conform-background}

In this section, we give some important reminders on CP, focusing on the split approach at the core of this paper \citep{papadopoulos2002inductive}.

Let us assume that we have access to a 
data set $\calD=\{(X_i,Y_i)\}_{1 \leq i \leq n}$, that we split into 
a learning set $\calD^{lrn}$ and a non-overlapping calibration set 
$\calD^{cal}$, 
containing respectively $n_\ell\geq 1$ and $n_c \geq 1$ data points such that $n_\ell+n_c=n$.

The first step of split CP, referred to as the \textit{learning step}, consists in finding a \textit{base} predictor $f\in\calF$ using the learning data set $\calD^{lrn}$. This predictor, denoted $\hf$, is then used to define a nonconformity score function 
$s = s_{\hf}: \calX \times \calY \rightarrow \IR$, 
such that for a pair $(x,y) \in \calX \times \calY$, 
$s_{\hf}(x,y)$ measures the level of non-conformity of the point $(x,y)$ with respect to the base predictor $\hf$. In other word, it measures how far is the true value~$y$ from the prediction $\hf(x)$. Whether we are in the regression or classification setting, many possible base predictors and score functions exist in the literature (see e.g.~\cite{angelopoulos2023conformal}). In Example~\ref{exemple:base-predictor}, we recall the most widely used base predictors and associated score functions for conformal regression.

In the second step of split CP, referred to as the \textit{calibration step}, we construct the prediction set. To this end, we first calculate the values of $s_{\hf}$ taken on the calibration set $\calD^{cal}$, 
called the nonconformity scores 
$S_i := s_{\hf}(X_i,Y_i)$, $i \in \intset{n_c}$. Then, we compute the $\lceil (n_c+1)(1-\alpha) \rceil$-th smallest nonconformity score 
$\hat{q}_{1-\alpha}:=S_{(\lceil (n_c+1)(1-\alpha) \rceil)}$, 
%
%
where $S_{(1)} \leq \ldots \leq S_{(n_c)}$, and we return, for any $x\in \calX$, the set-valued function $\Chat:\calX \rightarrow 2^\calY$ such that $\forall x\in\calX$:
\begin{align}
\label{set_conf}
\Chat(x) 
:= \Bigl\{ y \in \calY \,:\, s_{\hf}(x, y) \leq \hat{q}_{1-\alpha} \Bigr\} 
\, . 
\end{align} 
In the case where $\lceil (n_c+1)(1-\alpha) \rceil>n_c$, we fix $\hat{q}_{1-\alpha} = +\infty$, meaning that we take the trivial prediction set $\Chat(x) = \calY$. 
 Stated differently, $\hat{q}_{1-\alpha}$ corresponds to the $(1-\alpha)$-quantile of the data set $\{S_i\}_{i=1}^{n_c}\cup \{+\infty\}$. Quite remarkably, if we only assume that the scores $S_1,\ldots, S_{n_c}$ and $s(X, Y)$ are \textit{exchangeable}, the set \eqref{set_conf} satisfies condition~\eqref{eq:conform-obj} \citep{papadopoulos2002inductive}. Moreover, if the scores are continuous random variables, it can be shown that $\IP(Y\in \Chat(X)) \leq 1-\alpha + 1/(n_c+1)$. Note that this type of guarantees are referred as \textit{marginal} because the probabilities are taken with respect to the test point $(X,Y)$ and the calibration set $\calD^{cal}$. 

\begin{exemple}\emph{(Conformal regressors).}
    \label{exemple:base-predictor}
    \begin{enumerate}
        \item In the standard \emph{Split CP} \citep{papadopoulos2002inductive} the base predictor is a function $\mu$ in $\calF$, a class of regression function. Typically, $\hat{\mu} = \argmin_{\mu\in \calF}\sum_{i=1}^{n_\ell}(Y_i - \mu(X_i))^2$. Then, the score function is taken to be the absolute residual $s(x, y) = |y-\hat{\mu}(x)|$. This gives the interval $\Chat(x)=[\hat{\mu}(x)-\hat{q}_{1-\alpha}, \hat{\mu}(x)+\hat{q}_{1-\alpha}]$.
        \item In \emph{Locally-Weighted Conformal Inference} \citep{papadopoulos2008normalized}, an additional base predictor is added in order to have interval sizes that adapt to the value of $X$. More precisely, we have $f=(\mu,\sigma)$, with $\mu \in \calF_1$, $\sigma \in \calF_2$. $\hat{\mu}$ is fitted as above, and $\hat{\sigma}$ fits the residuals given $X=x$, i.e. $\hat{\sigma} = \argmin_{\sigma \in \calF_2} \sum_{i=1}^{n_\ell}(R_i - \sigma(X_i))^2$ where $R_i=|Y_i-\hat{\mu}(X_i)|$. Taking the scoring function $s(x, y) = |y-\hat{\mu}(x)|/\hat{\sigma}(x)$, the resulting prediction interval is given by $\Chat(x)=[\hat{\mu}(x)-\hat{\sigma}(x)\hat{q}_{1-\alpha}, \hat{\mu}(x)+\hat{\sigma}(x)\hat{q}_{1-\alpha}]$.
        \item In \emph{Conformalized Quantile Regression} (CQR) \citep{romano2019conformalized}, we have $f=(Q_{\alpha/2},Q_{1-\alpha/2})$ where $Q_{\alpha/2} \in \calF_1$ (respectively $Q_{1-\alpha/2}\in \calF_2$) is a quantile regressor of $Y$ given $X=x$, of order $\frac{\alpha}{2}$ (respectively $1-\frac{\alpha}{2}$). For instance, we take $\widehat{Q}_{\alpha/2} = \argmin_{Q\in\calF_1}\sum_{i=1}^{n_\ell}\rho_{\alpha/2}(Y_i-Q(X_i))$, where $\rho_{\alpha/2}$ is the ``pinball'' loss \citep{koenker2001quantile}. $\widehat{Q}_{1-\alpha/2}$ is defined analogously with $\rho_{1-\alpha/2}$. Then, we take $s(x,y) = \max\{\widehat{Q}_{\alpha/2}(x) - y, y- \widehat{Q}_{1-\alpha/2}(x)\}$, which gives $\Chat(x)=[\widehat{Q}_{\alpha/2}(x) - \hat{q}_{1-\alpha}, \widehat{Q}_{1-\alpha/2}(x) + \hat{q}_{1-\alpha}]$.
    \end{enumerate}
\end{exemple}

As our task here is not to be exhaustive on the CP literature, we refer to~\citet{vovk2005algorithmic},~\citet{angelopoulos2023conformal}, and~\citet{fontana2023conformal} for in-depth presentations of CP and 
to \citet{manokhin_2022_6467205} for a curated list of papers.


\subsection{Problem statement}
\label{sec:problem-statement}

In this work, we focus on conformal regression problems with $\calY=\IR$. Precisely, we study when and how split CP outputs prediction sets approximating the solution of Problem~\eqref{eq:informal-opt}. Since we consider regression tasks, let us first re-write the latter optimization problem by replacing $\mu$ with the Lebesgue measure $\lambda:\calB(\IR)\rightarrow [0,+\infty]$, $\calB(\IR)$ being the Borel $\sigma$-algebra on $\IR$:
\begin{equation}
    \label{eq:formal-opt}
    \min_{C \in \calC_{\text{Borel}}} \IE[\lambda(C (X))]  \hspace{0.2cm}\text{s.t.}\hspace{0.2cm}  \IP(Y\in C(X)) \geq 1-\alpha\hspace{0.05cm},
\end{equation}
where $\calC_{\text{Borel}}:=\{\text{Measurable functions } C:\calX\rightarrow \calB(\IR)\}$. In the following, we refer to $C^*$ as one minimizer of~\eqref{eq:formal-opt}.
Note that optimizing over all possible measurable functions in $\calC_\text{Borel}$ can be difficult in practice but also sometimes useless. For instance, in the regression setting where $Y=f^*(X) + \calN(0, \sigma^2)$, the distribution of $Y$ given $X=x$ is symmetric and has only one mode. The optimal $C^*(x)$ will thus necessarily be an interval centered at $f^*(x)$ (see the discussion below on closed-form expressions for $C^*$). In this simple case, we see that looking at the full set $\calC_{\text{Borel}}$ is useless as one could only consider the set of functions $C(x)$ that outputs intervals.  

In this work, we will restrict the space of research $\calC_{\text{Borel}}$ in~\eqref{eq:formal-opt} to smaller sets of set-valued functions, namely those outputting intervals. Like in statistical learning theory, this restriction can be thought of as a source of \emph{approximation error}. In other words, we would like the restricted set to be sufficiently complex so that it includes (one of) the function solving~\eqref{eq:formal-opt}. If it is not the case, we face such an approximation error. Nevertheless, controlling this error is not the objective of this paper, as we are going to mostly focus on the \emph{estimation error}, which comes from the fact that only an empirical version of~\eqref{eq:formal-opt} is going to be solved.


\paragraph{On closed-form expressions for~\eqref{eq:formal-opt}.} In some settings, we can derive oracle prediction sets solving~\eqref{eq:formal-opt}. For instance, when there is no covariate $X$, we recover the Minimum Volume Set (MVS) estimation problem of \citet{NIPS2005_d3d80b65}. In that case, if $Y$ admits a density $p_Y(y)$ with respect to $\lambda$, we can derive a closed-form expression for $C^*$ in terms of density level sets: $\exists t_\alpha\geq 0$ such that $C^*=\{y\in\IR : p_Y(y) \geq t_\alpha\}$ as soon as $\lambda(\{y\in\IR : p_Y(y) = t_\alpha\})=0$. Similarly, if we condition the expectation and the probability in~\eqref{eq:formal-opt} on $X=x$, and if $Y|X=x$ admits a conditional density $p_{Y|X}(y|x)$, we get $C^*(x)=\{y\in\IR : p_{Y|X}(y|x) \geq t'_\alpha(x)\}$ for some $t'_\alpha(x)\geq 0$ \citep{polonik2000conditional,lei2014distribution}. This has led to a whole literature based on plug-in (conditional) density estimators, which is not the approach considered in this paper but which is worth mentioning.

\subsection{Related work}

\paragraph{Minimum Volume Sets and Density Level Sets estimation.} As mentioned above, problem~\eqref{eq:formal-opt} is strongly linked with the MVS estimation Problem \citep{NIPS2005_d3d80b65}, which is itself linked with the problems of support estimation \citep{scholkopf2001estimating,munoz2006estimation} and density level sets estimation \citep{polonik2000conditional}. Despite the fact that these methods can all be used to construct prediction sets with a desired coverage level, their link with Conformal Prediction has received little attention in the past. Among the most well known works, we can mention those taking the idea of plug-in (conditional) density estimators mentioned above, on top of which they add a calibration step to obtain better coverage guarantees \citep{lei2013distribution,lei2014distribution,izbicki2022cd,chernozhukov2021distributional}. However, their theoretical results regarding the size of the returned set are mostly asymptotic. More importantly, modern supervised learning and CP techniques are rarely based on a non-parametric estimation of the (conditional) density, which can be difficult in practice. They are mostly based on the learning of a (parametrized) prediction function that belongs to a set of hypotheses (see Example~\ref{exemple:base-predictor}). Thus, the framework of Section~\ref{sec:problem-statement}, largely inspired by \citet{NIPS2005_d3d80b65}, where we restrict the class of prediction sets to a smaller subset, seems more appropriate for the design and analysis of CP methods.

\paragraph{Efficient Conformal Prediction.} 


Recently, the question of controlling the size of the learned prediction set and explicitly see this as a minimization objective has attracted a lot of attention. For instance in \citet{yang2024selection} and \citet{liang2024conformal}, the authors focus on efficiency-oriented model selection. Closer to our work, we can mention \citet{stutz2021learning} and \citet{kiyani2024length}, which consider an optimization problem similar to that of~\eqref{eq:formal-opt}, with a focus on the optimization aspects and on relaxations of the problem. However, they do not provide statistical guarantees on the learned estimates. Finally, there is the work of \citet{baiefficient}, which proposes a generalization of the split calibration step, where instead of a single quantile, multiple learnable parameters are optimized to minimize the size of the final prediction set.

\section{Restriction to intervals with constant size}
\label{sec:constant}

In this section, we restrict the space of research in Problem \eqref{eq:formal-opt} to the class of prediction sets $\calC^{\text{const}}_{\calF} = \{C_{f,t}(\cdot) = [f(\cdot)-t, f(\cdot)+t]; f \in \calF, t\geq0\}$. This class is already quite interesting as it encapsulates the standard split CP regressor (see Example~\ref{exemple:base-predictor}.1). Notice that for simplicity of exposition, and because it does not depend on $x$, in this section the expected size of $C_{f,t}\in\calC^{\text{const}}_{\calF}$ is simply denoted $\lambda(C_{f,t})=2t$. 

\subsection{Base predictor $f\in\calF$ is given: optimality of the conformal step}
\label{sec:f-given}

We first start in the setting where the base predictor $f$ is given, meaning that we do not consider the learning phase. Over $\calC^{\text{const}}_{\calF}$, the optimization problem~\eqref{eq:formal-opt} becomes:
%
\begin{equation}
    \label{eq:obj-constant}
    \min_{t \geq 0} \hspace{0.2cm}  2t  \hspace{0.2cm}\text{s.t.}\hspace{0.2cm}  \IP(|Y-f(X)|\leq t) \geq 1-\alpha\hspace{0.05cm}.
\end{equation}
Denoting by $S=|Y-f(X)|$ the random variable of the absolute residual, the solution of the above optimization corresponds to the quantile of order $1-\alpha$ of the random variable $S$. More formally, if we denote by $Q(\hspace{.2em} \cdot \hspace{.2em}  ; S):[0,1]\rightarrow \IR$ the quantile function of $S$, then the optimal value solving~\eqref{eq:obj-constant} is exactly $t^*=Q(1-\alpha; S)$ and the associated optimal set is 
$C^{1-\alpha}_{f,t^*}(x) = [f(x) - t^*, f(x) + t^*] \; .$

Importantly, notice that the conformal step of the original split CP in fact solves an empirical version of the previous problem, but with a slightly increased coverage: 
\begin{align}
    \min_{t \geq 0} \quad &\; 2t \label{eq:obj-constant-emp} \\
     \quad \text{s.t.} \quad & \frac{1}{n_c}\sum_{i=1}^{n_c}\1\{|Y_i-f(X_i)|\leq t\} \geq \frac{(1-\alpha)(n_c+1)}{n_c}\; , \nonumber
\end{align}
with solution $\hat{t} = S_{(\lceil (n_c+1)(1-\alpha) \rceil)}$ and associated set denoted 
$C^{1-\alpha}_{f,\hat{t}}(x) = [f(x) - \hat{t}, f(x) + \hat{t}] \; .$
As mentioned above, $\hat{t}$ is the quantity computed during the calibration step of the split CP method (see Section~\ref{sec:conform-background}). It corresponds to the empirical quantile function of $S$, defined by $\widehat{Q}(\hspace{.1em} q \hspace{.1em};\{S_i\}_{i=1}^{n_c}):= \inf\{t : \frac{1}{n_c}\sum_{i=1}^{n_c}\1\{S_i\leq t\}\geq q\}$, evaluated at $(1-\alpha)(n_c+1)/n_c$ instead of $1-\alpha$ to be slightly more conservative. In other words, this means that, when $f$ is given, the calibration step in split CP outputs a conservative empirical estimator of the oracle prediction interval solution of Problem \eqref{eq:obj-constant}.

From the theory of CP, we already know that $\IP(Y\in C^{1-\alpha}_{f,\hat{t}}(X))\geq 1-\alpha$ (see e.g. \citet[Theorem 2.2]{lei2018distribution}). It remains to study the excess volume loss of $C^{1-\alpha}_{f,\hat{t}}$ which is measured by the difference in length between $C^{1-\alpha}_{f,t^*}$ and $C^{1-\alpha}_{f,\hat{t}}$. To this aim, it is sufficient to study the difference between the empirical quantile $\hat{t} = \widehat{Q}((1-\alpha)\frac{n_c+1}{n_c};\{S_i\}_{i=1}^{n_c})$ and the true quantile $t^* = Q(1-\alpha; S)$,  
as done in the following proposition (proof in Appendix~\ref{app:proof-prop}).\\

\begin{proposition}
    \label{prop:upper-f-given}
    Let $\hat{t} = \widehat{Q}((1-\alpha)\frac{n_c+1}{n_c};\{S_i\}_{i=1}^{n_c})$ and $C^{1-\alpha}_{f,\hat{t}}$ the corresponding set. If the points in $\calD^{cal}$ are i.i.d., and if $(n_c+1)(1-\alpha)$ is not an integer, then with probability greater than $1-\delta$ we have:
    \begin{equation}
        \label{eq:upper-f-given}
        \lambda\Big(C^{1-\alpha}_{f,\hat{t}}\Big) \leq 2Q\Big(1-\alpha + \frac{1-\alpha}{n_c} + \sqrt{\frac{\log(2/\delta)}{2n_c}}; S\Big)\;.
    \end{equation}
\end{proposition}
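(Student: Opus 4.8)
The plan is to reduce the statement to a one-sided uniform deviation bound between the empirical and the population c.d.f. of the absolute residual, and then to invoke the Dvoretzky--Kiefer--Wolfowitz (DKW) inequality. Since $C^{1-\alpha}_{f,\hat{t}}$ has constant length, $\lambda(C^{1-\alpha}_{f,\hat{t}})=2\hat{t}$, and as already observed in the text $\hat{t}$ is exactly the empirical $q$-quantile $\widehat{Q}(q;\{S_i\}_{i=1}^{n_c})$ of the calibration scores with $q:=(1-\alpha)(n_c+1)/n_c = 1-\alpha+\tfrac{1-\alpha}{n_c}$. Hence it is enough to prove that, with probability at least $1-\delta$, $\hat{t}\le Q(q+\varepsilon;S)$ for the choice $\varepsilon:=\sqrt{\log(2/\delta)/(2n_c)}$, because $q+\varepsilon$ is precisely the quantile level appearing on the right-hand side of~\eqref{eq:upper-f-given}. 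I would first dispose of the degenerate case $q+\varepsilon>1$, in which the right-hand side of~\eqref{eq:upper-f-given} is $+\infty$ (under the convention on $\hat{q}_{1-\alpha}$ from Section~\ref{sec:conform-background}) and there is nothing to prove; so assume $q+\varepsilon\le 1$.

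The first key ingredient is the elementary identity relating an empirical quantile to the empirical c.d.f.: writing $\widehat{F}_{n_c}(t):=\frac{1}{n_c}\sum_{i=1}^{n_c}\1\{S_i\le t\}$ and $F$ for the c.d.f. of $S=|Y-f(X)|$, one has $\hat{t}\le t \iff \widehat{F}_{n_c}(t)\ge q$ for every $t\in\IR$, because $\widehat{F}_{n_c}$ is nondecreasing and right-continuous so that $\{t:\widehat{F}_{n_c}(t)\ge q\}$ is a closed half-line $[\hat{t},+\infty)$; the same reasoning applied to $F$ gives $F(Q(p;S))\ge p$ for all $p\in(0,1]$. The second ingredient is a uniform control of $F-\widehat{F}_{n_c}$: since the points of $\calD^{cal}$, hence the scores $S_1,\dots,S_{n_c}$, are i.i.d., the DKW inequality (with Massart's constant) gives $\IP\big(\sup_{t\in\IR}(F(t)-\widehat{F}_{n_c}(t))>\varepsilon\big)\le 2e^{-2n_c\varepsilon^2}$, and with the above choice of $\varepsilon$ this is at most $\delta$; on the complementary event $\widehat{F}_{n_c}(t)\ge F(t)-\varepsilon$ for every $t$.

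Combining the two ingredients finishes the proof: set $t^{+}:=Q(q+\varepsilon;S)$; then $F(t^{+})\ge q+\varepsilon$, so on the good event $\widehat{F}_{n_c}(t^{+})\ge F(t^{+})-\varepsilon\ge q$, whence $\hat{t}\le t^{+}$ by the equivalence, and therefore $\lambda(C^{1-\alpha}_{f,\hat{t}})=2\hat{t}\le 2t^{+}=2Q\big(1-\alpha+\tfrac{1-\alpha}{n_c}+\sqrt{\log(2/\delta)/(2n_c)};S\big)$, which is~\eqref{eq:upper-f-given}. (Only the one-sided half $\widehat{F}_{n_c}\ge F-\varepsilon$ of DKW is used; the two-sided constant $2$ is what produces the $\log(2/\delta)$ in the statement, and the one-sided Massart bound would sharpen it to $\log(1/\delta)$.)

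I do not anticipate a serious obstacle: the i.i.d. hypothesis is used only to license DKW, and the non-integrality of $(n_c+1)(1-\alpha)$ plays essentially no role in this (upper) direction --- it merely ensures that $\hat{t}$ coincides with the order statistic $S_{(\lceil(n_c+1)(1-\alpha)\rceil)}$ used in the calibration step, and is kept for consistency with the companion results. The only points needing care are getting the direction of the quantile/c.d.f. equivalence right (it hinges on right-continuity of $\widehat{F}_{n_c}$ and on the convention $\widehat{Q}(q;\cdot)=\inf\{t:\widehat{F}_{n_c}(t)\ge q\}$) and disposing cleanly of the degenerate regime $q+\varepsilon>1$ where the bound is vacuously $+\infty$.
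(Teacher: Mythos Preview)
Your proof is correct and follows essentially the same route as the paper: reduce to $\hat{t}\le Q(q+\varepsilon;S)$ via the Galois connection between the empirical quantile and the empirical c.d.f., and control the gap $F-\widehat{F}_{n_c}$ uniformly with DKW. Your remark that the non-integrality hypothesis is not actually needed for this upper direction is correct---the paper invokes it at the corresponding step, but the equivalence $\hat{t}\le t\iff\widehat{F}_{n_c}(t)\ge q$ holds unconditionally by right-continuity of $\widehat{F}_{n_c}$.
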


Interestingly, the right-hand side of~\eqref{eq:upper-f-given} also corresponds to the optimal length of a more conservative oracle, namely $\lambda\Big(C^{1-\alpha+\beta_{n_c}}_{f,t^*}\Big)$ with $\beta_{n_c}=\frac{1-\alpha}{n_c} + \sqrt{\frac{\log(2/\delta)}{2n_c}}$. This means that, with high probability, the empirical interval obtained with the conformal step is smaller than the smallest oracle interval with increased coverage $1-\alpha+\beta_{n_c}$, and where $\beta_{n_c}$ is tending to $0$ as $n_c$ grows.

Although interesting, the previous result does not really tell us how different is the size of the predicted interval compared with the oracle one. 
To obtain a finite-sample upper bound on this difference, we must consider some regularity assumption on the distribution of $S$, and more particularly on its quantile function.
\begin{assumption}\emph{(Regularity condition).}
    \label{ass:regularity}
    Let $S=|Y-f(X)|$. $\forall f\in \calF$, $\forall \alpha \in (0,1), \exists r,\gamma \in (0,1]$ and $L>0$ such that $Q(\vspace{.2em}\cdot\vspace{.2em};S)$ is locally $(\gamma,L)$-Hölder continuous, i.e. $\forall q_1,q_2 \in [1-\alpha - r, 1-\alpha + r]$: $$|Q(q_1;S) - Q(q_2;S)|\leq L|q_1-q_2|^\gamma \; .$$
%
\end{assumption}
This type of regularity condition can notably be found in \citet{lei2013distribution,yang2024selection}, where it is used to obtain finite-sample bounds on the volume of the returned set. Given this assumption, we can derive the following corollary.

\begin{corollary}
    \label{cor:f-given}
    Let the conditions of Proposition~\ref{prop:upper-f-given} and Assumption~\ref{ass:regularity} hold. If $n_c$ is large enough so that $\frac{1-\alpha}{n_c} + \sqrt{\frac{\log(2/\delta)}{2n_c}} \leq r$, then with probability greater than $1-\delta$:
    \begin{equation}
        \label{eq:upper-f-given-final}
        \lambda\Big(C^{1-\alpha}_{f,\hat{t}}\Big) \leq \lambda\Big(C^{1-\alpha}_{f,t^*}\Big) + 2 L\Big(\frac{1}{n_c} + \sqrt{\frac{\log(2/\delta)}{2n_c}}\Big)^\gamma\;.
    \end{equation} 
\end{corollary}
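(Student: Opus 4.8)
The plan is to combine Proposition~\ref{prop:upper-f-given} with the local Hölder regularity of the quantile function provided by Assumption~\ref{ass:regularity}; no new probabilistic argument is required, so the corollary is essentially deterministic once we condition on the high-probability event of Proposition~\ref{prop:upper-f-given}.

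First I would set $\beta_{n_c} := \frac{1-\alpha}{n_c} + \sqrt{\frac{\log(2/\delta)}{2n_c}}$ and invoke Proposition~\ref{prop:upper-f-given}: on an event of probability at least $1-\delta$, $\lambda(C^{1-\alpha}_{f,\hat t}) \leq 2\,Q(1-\alpha + \beta_{n_c}; S)$. Since $t^* = Q(1-\alpha;S)$ and $\lambda(C^{1-\alpha}_{f,t^*}) = 2t^*$, adding and subtracting $2t^*$ gives
\begin{equation*}
\lambda\big(C^{1-\alpha}_{f,\hat t}\big) \leq \lambda\big(C^{1-\alpha}_{f,t^*}\big) + 2\big(Q(1-\alpha+\beta_{n_c};S) - Q(1-\alpha;S)\big).
\end{equation*}
Because the quantile function is non-decreasing and $\beta_{n_c}\geq 0$, the bracketed difference is non-negative, hence equal to $|Q(1-\alpha+\beta_{n_c};S) - Q(1-\alpha;S)|$.

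Next I would verify that the hypothesis on $n_c$, namely $\beta_{n_c}\leq r$, places both arguments $1-\alpha$ and $1-\alpha + \beta_{n_c}$ inside the window $[1-\alpha - r,\,1-\alpha + r]$ on which $Q(\cdot;S)$ is assumed $(\gamma,L)$-Hölder. Applying Assumption~\ref{ass:regularity} with $q_1 = 1-\alpha+\beta_{n_c}$ and $q_2 = 1-\alpha$ yields $|Q(1-\alpha+\beta_{n_c};S) - Q(1-\alpha;S)| \leq L\,\beta_{n_c}^{\gamma}$. Finally, since $1-\alpha \leq 1$ and $t\mapsto t^{\gamma}$ is non-decreasing on $[0,\infty)$, I would bound $\beta_{n_c}^{\gamma} \leq \big(\tfrac{1}{n_c} + \sqrt{\tfrac{\log(2/\delta)}{2n_c}}\big)^{\gamma}$; substituting this back produces exactly~\eqref{eq:upper-f-given-final}.

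The only point needing care — and it is hardly an obstacle — is checking that both quantile arguments remain within the Hölder neighborhood, which is precisely what the assumed lower bound on $n_c$ secures, together with the harmless replacement of $1-\alpha$ by $1$ in the numerator to reach the stated clean form. Everything else is a direct substitution into the conclusion of Proposition~\ref{prop:upper-f-given}.
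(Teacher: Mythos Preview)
Your proposal is correct and follows exactly the same approach as the paper, which states the proof as a ``direct application of Prop.~\ref{prop:upper-f-given} with Assumption~\ref{ass:regularity} and using the fact that $1-\alpha\leq 1$.'' You have simply unpacked these three ingredients in full detail.
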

\begin{proof}
    Direct application of Prop~\ref{prop:upper-f-given} with Assumption~\ref{ass:regularity} and using the fact that $1-\alpha\leq1$. 
\end{proof}

The previous corollary provides an excess volume upper-bound for $C^{1-\alpha}_{f,\hat{t}}$ compared to the oracle $C^{1-\alpha}_{f,t^*}$. This bound does not only confirm the asymptotic optimality of the conformal procedure when $f$ is given, but also provides a rate of convergence dominated by $\tilde{\calO}(n_c^{-\gamma})$ when we get rid of constants and log factors. Although simple to be obtained, to our knowledge this type of bound has never been shown.

\begin{remark}
    \label{rmk:nested}
    When the base predictor is given, all the previous study can be easily extended to the general CP nested set view of \citet{gupta2022nested}. For simplicity of exposition, this analysis is deferred to Appendix~\ref{sec:nested}
\end{remark}

\subsection{Base predictor $f\in\calF$ is \emph{not} given: sub-optimality of the least-square regressor}

In the previous section we saw that, when $f$ is fixed, the calibration step of the split CP method corresponds to the minimization of the size of the interval, up to some statistical error. Now, we investigate how $f$ should be learned during the learning step to obtain a prediction interval of minimal size. Let us consider Problem~\eqref{eq:formal-opt} over $\calC^{\text{const}}_{\calF}$:
%
%
%
\begin{equation}
\min_{f\in\calF, t \geq 0} \hspace{0.2cm}  2t  \hspace{0.2cm}\text{s.t.}\hspace{0.2cm}  \IP(|Y-f(X)|\leq t) \geq 1-\alpha\hspace{0.05cm}.
\end{equation}
By replacing $t$ with its optimal value as a function of $f$, i.e. $t^*=Q(1-\alpha ; |Y-f(X)|)$, we obtain what we call the $(1-\alpha)$-QAE problem (Quantile Absolute Error):
\begin{align}\label{eq:QAE}
\min_{f\in\calF} &\; Q(1-\alpha ; |Y-f(X)|)\;.
\end{align}
In words, this optimization problem tells us that $f$ should minimize the $(1-\alpha)$-quantile of the distribution of $S = |Y-f(X)|$. This is quite natural, since this quantile is the one selected to build the prediction interval, and the smaller it is, the smaller the interval will be.

What this optimization problem also tells us is that taking $f$ as the minimizer of the Mean Squared Error (MSE) $\EE[(Y-f(X))^2]$, denoted $\mu(x) = \EE[Y|X=x]$, like it is suggested in classical split CP, is not generally optimal in terms of volume minimization, and one should rather take the minimizer of the $(1-\alpha)$-QAE. Notice that, while in general the minimizer of the MSE does not match the one of the $(1-\alpha)$-QAE,
it does in some settings. For instance, in \citet[Section 3]{lei2018distribution}, the authors claim 
that if the residual distribution $Y-\mu(X)$ is independent of $X$ and admits a symmetric density with one mode at $0$, then taking $f=\mu$ is optimal, i.e. the minimizer of the MSE matches the minimizer of the $(1-\alpha)$-QAE. However, this kind of assumptions can be quite strong in practice, reason why it is preferable to keep the minimization of the $(1-\alpha)$-QAE as the main objective, since it is optimal on $\calC^{\text{const}}_{\calF}$ no matter the distribution of $(X,Y)$. \looseness = -1


\subsection{\texttt{EffOrt}: EFFiciency-ORienTed split conformal regression}
In this section, we propose a methodology to approach the oracle prediction set $C^{1-\alpha}_{f^*,t^*}(x) = [f^*(x) - t^*, f^*(x) + t^*]$, with $f^*$ the minimizer of the $(1-\alpha)$-QAE (Problem \eqref{eq:QAE}) and $t^* = Q(1-\alpha ; |Y-f^*(X)|)$. We place ourselves in the split conformal framework of Section~\ref{sec:conform-background}, having access to a learning data set $\calD^{lrn}$ used to learn $f$, and a calibration data set $\calD^{cal}$. With a slight abuse of notation we will write $i\in\calD^{lrn}$ or $\calD^{cal}$ to indicate $(X_i,Y_i)\in\calD^{lrn}$or $\calD^{cal}$.

The proposed methodology, referred to as \texttt{EffOrt}, consists in the following steps:
\begin{enumerate}
    \item Learn $\hat{f} \in \underset{f\in\calF}{\argmin}  \; \widehat{Q}(1-\alpha;\{|Y_i-f(X_i)|\}_{i\in\calD^{lrn}})$, i.e. minimize the empirical version of the $(1-\alpha)$-QAE
    \item Proceed to the calibration step, i.e. take $\hat{t} = \widehat{Q}\Big((1-\alpha)\frac{n_c+1}{n_c};\{|Y_i-\hat{f}(X_i)|\}_{i\in\calD^{cal}}\Big)$
    \item For any test point $X\in\calX$, output the prediction interval $C_{\hat{f},\hat{t}}^{1-\alpha}(X) = [\hat{f}(X)-\hat{t},\hat{f}(X)+\hat{t}]$
\end{enumerate}
In \texttt{EffOrt}, the main difficulty is in the first step, where the empirical $(1-\alpha)$-QAE must be minimized. Indeed, it does not have a closed-form solution, and if we want to use a gradient-based optimization algorithm, we must compute the gradient of the empirical $(1-\alpha)$-QAE which not trivial, or might even not be clearly defined. In the following, we present a gradient-based optimization procedure inspired by \citet{pena2020solving}.

%
\subsubsection{Optimization of the empirical $(1-\alpha)$-QAE}\label{sec:optim_emp_QAE}

We assume that $f\in\calF$ is parametrized by $\theta \in\Theta$, and for the sake of generality, we consider the problem: 
\begin{align} \label{eq:optim_quantile}
	&\min_{\theta} \; \widehat{Q}(1-\alpha;\{\ell(\theta;Z_i)\}_{i\in\calD^{lrn}}) \; .
\end{align}
Here, $\ell:\Theta\times\calZ\rightarrow\IR$ is a loss function, taking as input a parameter $\theta$ and a data point $Z_i$. In the step 1 of \texttt{EffOrt}, $Z_i=(X_i,Y_i)$ and $\ell(\theta;Z_i)=|Y_i-f_\theta(X_i)|$.

To solve this problem, one natural idea is to use a gradient descent algorithm on $\widehat{Q}(1-\alpha;\{\ell(\theta;Z_i)\}_{i\in\calD^{lrn}})$. However, this function is not differentiable in $\theta$. We therefore follow the strategy of \citet{pena2020solving} and consider a smooth approximation of it. More precisely, we first approximate the empirical cumulative distribution function (cdf) $\widehat{F}(t, \theta) := \sum_{i\in\calD^{lrn}} \1\{ \ell(\theta;Z_i) \leq t\}$ by another function $\widetilde{F}_{\varepsilon}$ where the indicator is replaced by a smooth version of it:
\begin{align*}
	\widetilde{F}_{\varepsilon}(t, \theta) = \sum_{i\in\calD^{lrn}} \Gamma_{\varepsilon}(\ell(\theta;Z_i) - t) \; ,
\end{align*}
where $\varepsilon > 0$ is a parameter of the approximation. One possible choice for $\Gamma_{\varepsilon}$ is given in \citet[Eq. (2.6)]{pena2020solving} and is detailed in Appendix \ref{sec:optim_append}. Then, we define the smooth empirical quantile function by:
\begin{equation}\label{eq:smooth_quant}
	\widetilde{Q}_{\varepsilon}(q; (\ell(\theta;Z_i))_{i\in\calD^{lrn}}) = \inf \{t \,:\, \widetilde{F}_{\varepsilon}(t, \theta) \geq q \} \; .
\end{equation}
For a given $q$ and $\varepsilon > 0$, under mild assumptions on the loss function $\ell(\cdot)$, one can show that the gradient of Eq.  \eqref{eq:smooth_quant} is well-defined and has a closed-form that can be used in a gradient descent algorithm. The full procedure is detailed in Appendix \ref{sec:optim_append}.


\subsection{Theoretical analysis}

In this last subsection, we theoretically analyze the performance of the prediction set output by \texttt{EffOrt}. We are interested in two types of guarantees: (i) a coverage guarantee and (ii) an excess volume loss guarantee like the one in Eq.~\eqref{eq:upper-f-given-final}. To this aim, we require the following assumption.

\begin{assumption} \label{ass:complexity} There exists $\phi(\calF,\delta,n)<+\infty$ such that with probability at least $1-\delta$:
   \begin{align*}
       &\sup_{\overset{\scriptstyle t\geq 0}{f\in\calF}}\Big|\IP\left(|Y-f(X)|\leq t\right) - \frac{1}{n}\sum_{i=1}^n\1\{|Y_i-f(X_i)|\leq t\}\Big| \leq \phi(\calF,\delta,n) \; .
   \end{align*}
\end{assumption}
In this assumption, $\phiF$ bounds the worst-case estimation error of $\IP(|Y-f(X)|\leq t)$ using the empirical estimate $\frac{1}{n}\sum_{i=1}^n\1\{|Y_i-f(X_i)|\leq t\}$ over the whole function class $\calF$ and for any value of $t$. Typically, $\phiF$ will decrease with an increasing number of data points $n$ and increase as the \emph{complexity} of $\calF$ gets larger. In the following proposition, we explicitly derive a closed-form expression for $\phiF$ when the function class $\calF$ is finite. 
\begin{proposition}\emph{(Finite class $\calF$).}
    \label{prop:phi-finite}
    If $|\calF|<\infty$, then Assumption~\ref{ass:complexity} is verified with $\phiF = \sqrt{\frac{\log(2|\calF|/\delta)}{2n}}$.
\end{proposition}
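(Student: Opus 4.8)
The plan is to reduce the uniform control over the infinite collection of indicator events $\{|Y-f(X)|\le t : f\in\calF, t\ge 0\}$ to a finite union bound over the elements of $\calF$, and then apply a standard concentration inequality for each fixed $f$. The key observation is that for a \emph{fixed} $f\in\calF$, the quantity $\sup_{t\ge 0}\big|\IP(|Y-f(X)|\le t) - \frac1n\sum_{i=1}^n\1\{|Y_i-f(X_i)|\le t\}\big|$ is exactly the Kolmogorov–Smirnov statistic between the true cdf of the scalar random variable $S_f := |Y-f(X)|$ and the empirical cdf built from the i.i.d.\ sample $S_f^{(1)},\dots,S_f^{(n)}$. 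By the Dvoretzky–Kiefer–Wolfowitz inequality (with the Massart tight constant), for any fixed $f$ and any $u>0$,
\begin{equation*}
\IP\Big(\sup_{t\ge 0}\big|F_{S_f}(t) - \widehat F_{S_f,n}(t)\big| > u\Big) \le 2e^{-2nu^2}\;.
\end{equation*}

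Next I would take a union bound over the (finitely many) functions $f\in\calF$: the event that the supremum over $t$ exceeds $u$ for \emph{some} $f\in\calF$ has probability at most $2|\calF|e^{-2nu^2}$. Setting this equal to $\delta$ and solving for $u$ gives $u = \sqrt{\tfrac{\log(2|\calF|/\delta)}{2n}}$, which is precisely the claimed $\phiF$. On the complementary event, which has probability at least $1-\delta$, we have simultaneously for all $f\in\calF$ and all $t\ge 0$ that $\big|\IP(|Y-f(X)|\le t) - \frac1n\sum_{i=1}^n\1\{|Y_i-f(X_i)|\le t\}\big| \le \phiF$, which is exactly Assumption~\ref{ass:complexity}.

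An alternative, if one wants to avoid invoking DKW, is to replace the inner supremum over $t$ by a VC-type argument: for fixed $f$, the class of threshold sets $\{s\le t\}_{t\ge0}$ on $\IR$ has VC dimension $1$, so a uniform (over $t$) deviation bound holds; but since here $\calF$ is finite it is cleanest to just cite DKW for each $f$ and union-bound. One could even sidestep the supremum over $t$ entirely if one only needed finitely many thresholds, but the statement as written quantifies over all $t\ge0$, so the KS/DKW route is the natural one. A small point to be careful about is measurability of the supremum over $t$, but monotonicity of cdfs makes the supremum attainable as a countable supremum over rationals (plus jump points), so no measurability issue arises.

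The main obstacle — though it is a mild one — is recognizing that the double supremum collapses in this way: the supremum over the continuum of thresholds $t$ is handled ``for free'' by DKW for each individual $f$ (costing nothing in the final bound beyond the constant $2$ already present), while the only genuine union-bound cost is the $\log|\calF|$ term from the finitely many functions. Once this structure is seen, the rest is a one-line computation. I do not expect any serious technical difficulty; the proof is essentially DKW plus a union bound.
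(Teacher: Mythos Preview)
Your proposal is correct and follows essentially the same route as the paper: for each fixed $f\in\calF$ apply the DKW inequality to the Kolmogorov--Smirnov statistic of $S_f=|Y-f(X)|$, then union-bound over the finite class $\calF$ to get $2|\calF|e^{-2nu^2}$, and finally invert to obtain $\phiF=\sqrt{\log(2|\calF|/\delta)/(2n)}$. The paper's proof is exactly this three-line argument; your additional remarks on measurability and the VC alternative are sound but not needed.
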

%
Similarly to the “classical” statistical learning framework, where it is possible to obtain generalization bounds for infinite hypothesis classes, it is possible to derive other closed-forms for $\phiF$ in the infinite case by involving complexity measures like VC dimensions or Rademacher complexities. This, along with the proof of Prop.~\ref{prop:phi-finite}, is discussed in Appendix \ref{sec:closed-form-phi}. We can now present our main theoretical result.

\begin{theorem}
    \label{thme:main-constant}
    Let $C_{\hat{f},\hat{t}}^{1-\alpha}(x)$ 
    be the prediction interval output by \method~. If Assumption~\ref{ass:regularity} and~\ref{ass:complexity} are satisfied, the distribution of $Y$ is atomless, $n_c$ and $n_\ell$ are large enough so that $\frac{1-\alpha}{n_c} + \sqrt{\frac{\log(2/\delta)}{2n_c}} \leq r$ and $\phiFl\leq r$, then:
    \vspace{-0.2cm}
    \begin{enumerate}[leftmargin=*]
        \item $\IP(Y\in C_{\hat{f},\hat{t}}^{1-\alpha}(X)|\calD^{lrn})\geq 1-\alpha$ a.s. 
        \item With probability greater that $1-2\delta$:
        \begin{align}
            \label{eq:thme-const}
            &\lambda\left(C_{\hat{f},\hat{t}}^{1-\alpha}\right)\leq \lambda\left(C_{f^*,t^*}^{1-\alpha}\right) + 2L\Big(\frac{1}{n_{c}} +\sqrt{\frac{\log(2/\delta)}{2n_{c}}}\Big)^{\gamma} + 4L\phiFl^\gamma 
        \end{align}
    \end{enumerate}
\end{theorem}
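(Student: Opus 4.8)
The plan is to decompose the excess volume into two pieces, mirroring the two steps of \method: a calibration error (controlled exactly as in Corollary~\ref{cor:f-given}) and a learning error (coming from the fact that $\hat f$ minimizes only the \emph{empirical} $(1-\alpha)$-QAE). For the coverage claim (item 1), note that conditionally on $\calD^{lrn}$, the base predictor $\hat f$ is fixed, so the scores $S_i = |Y_i - \hat f(X_i)|$ for $i\in\calD^{cal}$ together with the test score are i.i.d.\ (hence exchangeable), and the standard split-CP guarantee (e.g.\ \citet[Theorem 2.2]{lei2018distribution}) gives $\IP(Y\in C_{\hat f,\hat t}^{1-\alpha}(X)\mid \calD^{lrn})\geq 1-\alpha$ almost surely. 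This step is routine.

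For item 2, write $\lambda(C_{\hat f,\hat t}^{1-\alpha}) = 2\hat t$ where $\hat t = \widehat Q((1-\alpha)\tfrac{n_c+1}{n_c};\{|Y_i-\hat f(X_i)|\}_{i\in\calD^{cal}})$. Conditionally on $\calD^{lrn}$, apply Proposition~\ref{prop:upper-f-given} and then Assumption~\ref{ass:regularity} (exactly as in Corollary~\ref{cor:f-given}) to the fixed function $\hat f$: with probability at least $1-\delta$ over $\calD^{cal}$,
\begin{equation*}
\lambda(C_{\hat f,\hat t}^{1-\alpha}) \leq \lambda(C_{\hat f, \hat t^*}^{1-\alpha}) + 2L\Big(\tfrac{1}{n_c} + \sqrt{\tfrac{\log(2/\delta)}{2n_c}}\Big)^\gamma,
\end{equation*}
where $\hat t^* = Q(1-\alpha;|Y-\hat f(X)|)$ is the population $(1-\alpha)$-QAE evaluated at $\hat f$. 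It then remains to bound the learning gap $\hat t^* - t^* = Q(1-\alpha;|Y-\hat f(X)|) - Q(1-\alpha;|Y-f^*(X)|)$.

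The heart of the argument — and the main obstacle — is bounding this learning gap using Assumption~\ref{ass:complexity}. The idea is that $\hat f$ minimizes the empirical quantile while $f^*$ minimizes the population quantile, so a uniform-convergence argument should trade an empirical optimum for a population one, paying $\phiFl$ each time it converts a CDF estimate into a probability. Concretely: let $g(f) := Q(1-\alpha;|Y-f(X)|)$ and $\hat g(f) := \widehat Q(1-\alpha;\{|Y_i-f(X_i)|\}_{i\in\calD^{lrn}})$. Assumption~\ref{ass:complexity} (applied on $\calD^{lrn}$, so with $\phiFl$) gives a two-sided comparison between the empirical CDF $\tfrac{1}{n_\ell}\sum\1\{|Y_i-f(X_i)|\le t\}$ and $\IP(|Y-f(X)|\le t)$ uniform over $f\in\calF$ and $t\ge 0$; inverting this at level $1-\alpha$ shows $|g(f) - \hat g(f)|$ translates into a shift of the quantile level by at most $\phiFl$, i.e.\ (using Assumption~\ref{ass:regularity}, valid since $\phiFl\le r$) $g(\hat f) \le \hat g(\hat f) + L\phiFl^\gamma$ and $\hat g(f^*) \le g(f^*) + L\phiFl^\gamma$ — one must be careful with the direction of the quantile inequalities and with the atomlessness hypothesis, which is what makes the CDF genuinely invertible near level $1-\alpha$. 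Chaining with $\hat g(\hat f)\le \hat g(f^*)$ (empirical optimality of $\hat f$) yields
\begin{equation*}
\hat t^* = g(\hat f) \le \hat g(\hat f) + L\phiFl^\gamma \le \hat g(f^*) + L\phiFl^\gamma \le g(f^*) + 2L\phiFl^\gamma = t^* + 2L\phiFl^\gamma.
\end{equation*}
Hence $\lambda(C_{\hat f,\hat t^*}^{1-\alpha}) = 2\hat t^* \le 2t^* + 4L\phiFl^\gamma = \lambda(C_{f^*,t^*}^{1-\alpha}) + 4L\phiFl^\gamma$. Combining with the calibration bound above and taking a union bound over the two "good" events (one of probability $1-\delta$ for $\calD^{lrn}$ via Assumption~\ref{ass:complexity}, one of probability $1-\delta$ for $\calD^{cal}$ via Proposition~\ref{prop:upper-f-given}, using independence of the two splits) gives the claim with probability at least $1-2\delta$. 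The delicate points to get right are: (i) the quantile-inversion step, where one converts a uniform CDF bound into a quantile shift without losing a factor, which is exactly where atomlessness and the local Hölder condition enter; and (ii) keeping track of whether the Hölder radius $r$ is large enough to cover all the perturbed levels $1-\alpha \pm \phiFl$ simultaneously, which is guaranteed by the hypothesis $\phiFl \le r$.
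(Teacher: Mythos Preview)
Your proof is correct and reaches exactly the same bound with the same constants, but the decomposition is genuinely different from the paper's. You pivot on the \emph{population} quantile at the learned predictor, $\hat t^* := Q(1-\alpha;|Y-\hat f(X)|)$, splitting $\hat t - t^*$ into a pure calibration term $\hat t - \hat t^*$ (handled by Corollary~\ref{cor:f-given}) and a pure learning term $\hat t^* - t^*$ (handled by the standard ERM chain $g(\hat f)\le \hat g(\hat f)+L\phi^\gamma \le \hat g(f^*)+L\phi^\gamma \le g(f^*)+2L\phi^\gamma$). The paper instead pivots on the \emph{empirical} quantile on the learning set, $\hat t_{lrn}:=\widehat Q(1-\alpha;\{|Y_i-\hat f(X_i)|\}_{i\in\calD^{lrn}})$, and passes through an MVS-style comparison with a shifted oracle $(f^*_{1-\alpha+\phi},t^*_{1-\alpha+\phi})$ at inflated coverage level $1-\alpha+\phi$, proving as intermediate results that $\IP(Y\in C_{\hat f,\hat t_{lrn}}^{1-\alpha}(X)\mid\calD^{lrn})\ge 1-\alpha-\phi$ and $\hat t_{lrn}\le t^*_{1-\alpha+\phi}$. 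Your route is more modular (the two error sources are fully decoupled), while the paper's route yields, as a byproduct, the interpretable pre-calibration guarantees and the explicit link to the MVS theory of \citet{NIPS2005_d3d80b65}. Two minor remarks: the union bound does not require independence of the splits, and the quantile-inversion steps you flag as delicate actually go through without invoking atomlessness (only the Galois-connection inequalities between $F$ and $Q$ and the uniform CDF bound are needed).
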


\begin{proof}[Proof sketch - Details in Appendix~\ref{app:proof-main}] The first result is classical \citep{lei2018distribution}. Let us focus on the second one, proved with the following steps.

\underline{Step 1:} In the first step of \method, we are actually solving the empirical objective $\min_{f\in\calF, t\geq 0}~ \{t$ s.t. $n_l^{-1}\sum_{i\in\calD^{lrn}}\1\{|Y_i-f(X_i)|\leq t\}\geq 1-\alpha\}$, with solutions denoted by $\hat{f}$ and $\hat{t}_{lrn}$. Using the theory of MVS estimation \citep{NIPS2005_d3d80b65}, we can compare this solution to the oracle one. Indeed, by adapting the proof of \citet[Theorem 1]{NIPS2005_d3d80b65}, we can show that with probability greater than $1-\delta$:
\begin{equation}
    \label{eq:lower-scott}
    \IP(Y\in C_{\hat{f},\hat{t}_{lrn}}^{1-\alpha}(X)|\calD^{lrn})\geq 1-\alpha - \phiFl
\end{equation}
and
\begin{equation}
    \label{eq:lower-scott-bis}
    \lambda\left(C_{\hat{f},\hat{t}_{lrn}}^{1-\alpha}\right) \leq \lambda\left(C_{f_{1-\alpha+\phi}^*,t_{1-\alpha+\phi}^*}^{1-\alpha + \phi}\right) \;,
\end{equation}
where $\phi \equiv \phiFl$ and $C_{f_{1-\alpha+\phi}^*,t_{1-\alpha+\phi}^*}^{1-\alpha + \phi}$ denotes the optimal oracle interval with coverage increased by $\phiFl$. This tells us that after the learning step we already have some guarantees: (i) a high probability coverage guarantee, with a looser coverage decreased by $\phiFl$, (ii) an excess volume guarantee, ensuring that the volume of the learned interval is smaller than the optimal one with coverage increased by $\phi$. Interestingly, this also means that the conformal step  allows to obtain an almost sure coverage guarantee, and to get rid of the statistical error due to $\phiFl$ in the coverage. 

\underline{Step 2:}  
From~\eqref{eq:lower-scott-bis} we have $\hat{t}_{lrn} \leq t_{1-\alpha+\phi}^*$ and therefore $\hat{t} \leq t^* + \hat{t} - \hat{t}_{lrn} + t_{1-\alpha+\phi}^* - t^*$. 

With~\eqref{eq:upper-f-given} in Prop.~\ref{prop:upper-f-given}, we have $\hat{t} \leq Q(1-\alpha + \frac{1-\alpha}{n_c}+\sqrt{\frac{\log(2/\delta)}{2n_{c}}}; |Y-\hat{f}(X)|_{|\calD^{lrn}})$. Moreover, from~\eqref{eq:lower-scott}, $\hat{t}_{lrn}\geq Q(1-\alpha - \phiFl; |Y-\hat{f}(X)|_{|\calD^{lrn}})$. Hence, thanks to Assumption~\ref{ass:regularity}, $\hat{t} - \hat{t}_{lrn} \leq L\Big(\frac{1}{n_{c}} +\sqrt{\frac{\log(2/\delta)}{2n_{c}}}\Big)^{\gamma} + L\phiFl^\gamma$.
It remains to bound $t_{1-\alpha+\phi}^* - t^*$. By definition, we have $t_{1-\alpha+\phi}^* = Q(1-\alpha + \phi; |Y-f_{1-\alpha+\phi}^*(X)|)$, and $t^* = Q(1-\alpha; |Y-f^*(X)|)$. Moreover, we notice that $t_{1-\alpha+\phi}^* \leq Q(1-\alpha + \phi; |Y-f^*(X)|)$ since by definition $f_{1-\alpha+\phi}^*$ minimizes $Q(1-\alpha + \phi; |Y-f(X)|)$ over all $f\in\calF$. Hence, $t_{1-\alpha+\phi}^* - t^* \leq L\phiFl^\gamma$, by Assumption~\ref{ass:regularity}. We conclude by combining everything. 
\end{proof}

To the best of our knowledge, Theorem~\ref{thme:main-constant} is one of the first to provide such a finite-sample upper bound on the excess-volume loss. It explicitly reveals the impact of the two split conformal steps of \method. The two first error terms (involving $n_c$) match the bound of Corollary~\ref{cor:f-given}, and can be seen as the volume loss due to the calibration step. While the third term, with $\phiFl$, is the error due to the learning step. If we omit the dependence in $\delta$, $\phiFl$ will typically be in the form of $\sqrt{\frac{\text{Compl}(\calF)}{n_\ell}}$, where $\text{Compl}(\calF)$ measures the complexity of $\calF$ (see Prop.~\ref{prop:phi-finite} and Appendix \ref{sec:closed-form-phi}). In most settings, we have $\text{Compl}(\calF)\gg \log(1/\delta)$. Hence, the rate in Eq.~\eqref{eq:thme-const} supports the important intuition that the learning step remains more important than the conformal step, at least in the sense that more data-points are needed to reach convergence. It is thus preferable to assign more points to the learning than the calibration.


\section{Extension to intervals with adaptive size}
\label{sec:adaptive}

We now consider the case of prediction intervals whose size adapts to the value of $X$. Formally, we consider the class of prediction sets $\calC^{\text{adap}}_{\calF,\calS} = \{C_{f,s}(x) = [f(x)-s(x), f(x)+s(x)] : f \in \calF, s\in\calS\}$, where $\calS$ is a class of non-negative functions. Importantly, this class of prediction sets encapsulates the Locally-Weighted Conformal Inference and the CQR methods (see Examples~\ref{exemple:base-predictor}.2 and~\ref{exemple:base-predictor}.3).

\subsection{Oracle prediction set and conditioning over $X=x$}

\label{sec:adap-oracle}

Following a similar reasoning as in Section~\ref{sec:constant}, we could first consider $f$ fixed and derive a closed-form oracle expression for $s$ by solving Problem~\eqref{eq:formal-opt} with $\calC_{\text{Borel}}$ replaced by $\calC^{\text{adap}}_{\calF,\calS}$. 
Unfortunately, contrary to the previous section, the solution of this problem does not have a direct expression.

For this reason, we propose to modify the problem so that $s$ admits an oracle closed-form expression which can be naturally estimated empirically. More precisely,
we condition the optimization problem over the event $X=x$, where $x\in\calX$. In that case, the problem becomes:
\begin{equation}
    \label{eq:obj-adap}
    \min_{s \in \calS}  s(x) \hspace{0.1cm} \text{s.t.} \hspace{0.1cm}  \IP(|Y-f(x)|\leq s(x)|X=x) \geq 1-\alpha 
\end{equation}
This problem is more difficult than \eqref{eq:QAE} as a \emph{conditional} coverage constraint is now required, which is known to be harder to obtain in practice \citep{vovk2012conditional, lei2014distribution}. If $\calS$ is sufficiently complex, Problem~\eqref{eq:obj-adap} has an oracle close-form solution, which is given by the $(1-\alpha)$-quantile of $|Y-f(X)|$ conditioned on $X=x$, denoted by $s^*(x) := Q(1-\alpha;|Y-f(X)|_{|X=x})$. Interestingly, the function $s^*(x)$ is the quantile regression function of $|Y-f(X)|$ given $X=x$, and corresponds to the solution of $\min_{s\in\calS}\EE[\rho_{1-\alpha}(|Y-f(X)| - s(X))]$, where $\rho_{1-\alpha}$ is the pinball loss. Hence, a natural solution is to use an empirical plug-in estimator of $s^*$, i.e. minimizing an empirical version of the pinball risk, as suggested in the next section.

\begin{remark}
    Another strategy could be to directly solve an empirical version of $\min_{f\in\calF,s\in\calS}\EE[s(X)]$ s.t.  $\IP(|Y-f(X)|\leq s(X)) \geq 1-\alpha$. This would allows deriving results similar to those of the previous section (see Appendix~\ref{sec:adapt-bonus}), but solving it in practice can be challenging, notably because of the empirical coverage constraint.
    Notice that, although their objective is different from ours, \citet{baiefficient} face a similar optimization problem, where they propose a smooth and differentiable relaxation to solve it. \looseness=-1
\end{remark}


\subsection{\texttt{Ad-EffOrt}}

We now describe our second method,~\methodAD, which extends \method~to prediction intervals with adaptive size. Like in~\method, we consider the split CP framework, having access to a learning dataset $\calD^{lrn}$ used to learn the base predictors $f$ and $s$, and a calibration data set $\calD^{cal}$. 
\methodAD~consists in the following steps:
\begin{enumerate}[leftmargin=*]
    \item $\hat{f} \in \underset{f\in\calF}{\argmin}  \; \widehat{Q}(1-\alpha;\{|Y_i-f(X_i)|\}_{i\in\calD^{lrn}})$
    \item $\hat{s} \in \underset{s\in\calS}{\argmin}\frac{1}{n_\ell}\sum_{i\in\calD^{lrn}}\rho_{1-\alpha}(|Y_i-\hat{f}(X_i)| - s(X_i))$
    \item $\hat{t} = \widehat{Q}\Big((1-\alpha)\frac{n_c+1}{n_c};\{|Y_i-\hat{f}(X_i)| - \hat{s}(X_i)\}_{i\in\calD^{cal}}\Big)$
    \item For any test point $X\in\calX$, output $C_{\hat{f}, \hat{s},\hat{t}}^{1-\alpha}(X) = [\hat{f}(X)-\hat{s}(X)-\hat{t},\hat{f}(X)+\hat{s}(X) +\hat{t}] \; .$
\end{enumerate}
\vspace{-.4em}
In the first two steps of \methodAD, we learn the model $f$ as in \method~and then fit the residuals using a quantile regression or order $1-\alpha$. 
Note that, in those two steps, the same data are used to learn both the prediction model $f$ and the quantile regressor $s$, but we also might split the learning set in two. 
Then, in the third step (calibration), we take the quantile of $\{|Y_i-\hat{f}(X_i)| - \hat{s}(X_i)\}_{i\in\calD^{cal}}$. 
This comes from the fact that the final prediction interval is in the form $[f(x)-s(x)-t,f(x)+s(x)+t]$, and, given the base predictors $(f, s)$, the smallest $t$ such that we satisfy the coverage is $Q\big((1-\alpha);|Y-f(X)| - s(X)\big)$. This claim is easily proved by following the analysis of Section~\ref{sec:f-given}. 

The main limitation of \methodAD~is the difficulty of providing a theoretical guarantee similar to that of Theorem~\ref{thme:main-constant}. This is notably due to the fact that while $s$ in learned in order to obtain conditional guarantees, $f$ is learned as in \method, i.e.~in order to obtain marginal guarantees. When $f$ is fixed, one could actually derive guarantees on $\hat{s}$ and its ability to solve~\eqref{eq:obj-adap} by providing a setting under which the quantile regressor is consistent, making~\eqref{eq:obj-adap} asymptotically verified. Last but not least, it is worth mentioning that, thanks to the calibration step, the marginal coverage guarantee is verified.



\begin{figure}[t]
	\centering
	\includegraphics[width=.4\linewidth]{./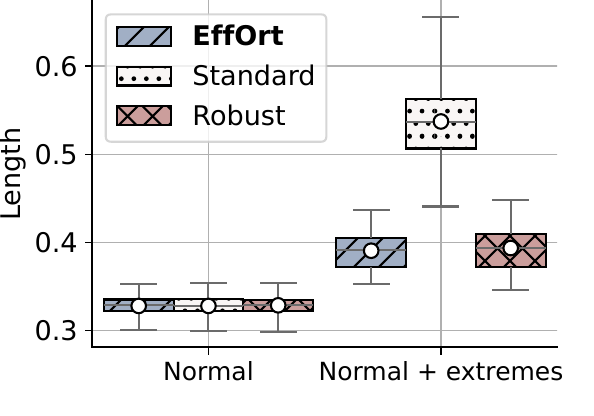}
	\hspace{-.5em}\includegraphics[width=.4\linewidth]{./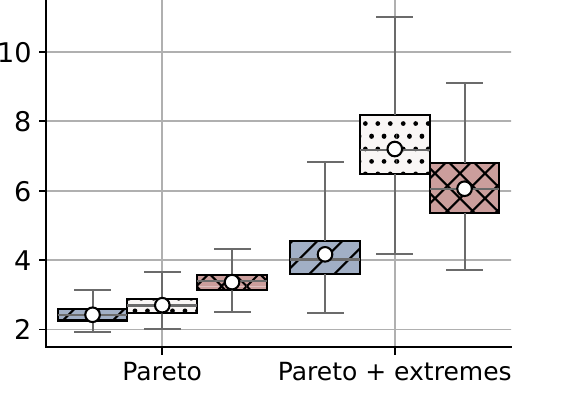}
	
	\includegraphics[width=.4\linewidth]{./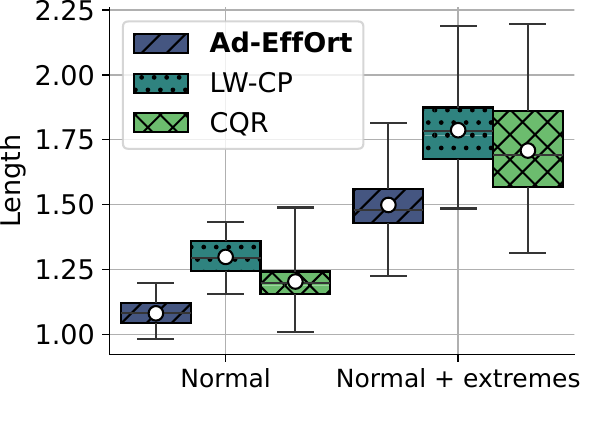}
	\hspace{-.5em}\includegraphics[width=.4\linewidth]{./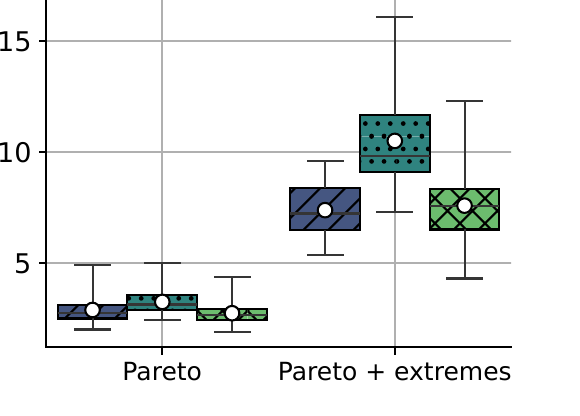}
	\vspace{-.4em}
	\caption{Boxplots of the $50$ empirical expected lengths obtained by evaluating \method~in Section \ref{sec:xpEffort} (top) and \methodAD~in Section \ref{sec:xpADEffort} (bottom). The white circle corresponds to the mean.}  
	\label{fig:illustr_synth}
\end{figure}

\section{Experiments}
\label{sec:xps}

In this section we compare our methods, \method~and \methodAD, to the standard and locally adaptive versions of split CP on synthetic data. Due to lack of space, additional results on real data are deferred to Appendix \ref{app:real-data}. Code to run all methods is in the Supplementary Material.

\subsection{Evaluation of \method}\label{sec:xpEffort}
We first show the ability of~\method~to return valid prediction sets of smaller size than those returned by standard split CP methods. More precisely, we consider asymmetric and heavy-tailed distribution
, illustrating the robustness of our method to a wide range of realistic situations. 

We consider a linear regression model $Y = X^T \theta + \calE$ where $\theta \sim \calU(0, 1)^{\otimes 3}$ is fixed, $X \sim \calN(0, I_3)$ with $I_3$ the identity matrix of size $3 \times 3$, and $\calE$ follows 4 different distributions: A standard normal, a mixture distribution $0.95 \cdot \calN(0, 1) + 0.05 \cdot \calN(2, 1)$, a Pareto distribution with shape and scale parameters equal to $2$ and $1$, and another mixture equals to $0.95 \cdot \text{Pareto}(2, 1) + 0.05 \cdot \calN(-20, 1)$. In the two mixtures, the additional normal distributions allow simulating extreme values. For each scenario, we generate $n_{lrn}=n_{cal}=1000$ pairs $(X_i, Y_i)$, as well as $n_{test}=1000$ test points to compute the empirical marginal coverage and the average size of the returned set. We repeat this procedure $50$ times.

During the learning step of \method, we solve the $(1-\alpha)$-QAE Problem \eqref{eq:QAE} using the gradient descent strategy of Section \ref{sec:optim_emp_QAE}. The smoothing parameter $\varepsilon$ is set to $0.1$, $n_{iter}=1000$, and the step-size sequence is $\{(1/t)^{0.6}\}^{n_{iter}}_{t=1}$. Furthermore, the space of research $\calF$ is restricted to the space of linear functions (see Appendix \ref{sec:add_xp} for additional results with Neural-Networks (NN)). For the split CP method, the regression function is either estimated using a linear regression or, in order to be fair in our comparisons, using a robust linear regression with Huber loss with parameter $\delta=1.35$. For all methods, the score function is the absolute value of the residuals, i.e., $s(x, y) = \lvert y - \fh(x) \rvert$ and we set $\alpha = 0.1$.

\textbf{Results:}
Figure \ref{fig:illustr_synth} (top) displays the boxplots of the $50$ test lengths obtained in the 4 scenarios (the coverage can be found in Appendix \ref{sec:add_xp} and is, as expected, near $0.9$). Overall, \method~produces more efficient marginally valid sets than those obtained with the split CP method, in all scenarios. Interestingly, when the noise follows a normal distribution (Figure \ref{fig:illustr_synth} - top left panel), \method~and the split CP method with a standard or a robust linear regressor return similar sets. This was expected because with this type of distribution, the least-square regressor is supposed to be as good as the minimizer of the QAE. This is as opposed to the mixture of Gaussians, where extreme points brings asymmetry and makes the linear least square regression not suitable anymore. When the noise follows a Pareto distribution (Figure \ref{fig:illustr_synth} top right panel), its heavy tail also makes the Split CP with robust regression output larger prediction sets. This could be explained by the fact that, in the learning step, the robust regression somehow gets rid of extreme points that should be kept, enforcing the calibration step to make a larger correction. In the last scenario, both baselines are outperformed by \method.

\subsection{Evaluation of \methodAD} \label{sec:xpADEffort}
We now compare \methodAD~to the Locally Weighted CP (LW-CP) and CQR methods (see Example \ref{exemple:base-predictor}).
%
We consider a simple heteroscedastic linear regression model $Y = X + \calE(X)$ where $X \sim \calN(0, 1)$ and $\calE(x)$ follows the 4 distributions of the previous section and with variance multiply by $x^2$. 
During the learning step of \methodAD, we solve the $(1-\alpha)$-QAE Problem \eqref{eq:QAE} using the gradient descent strategy of Section \ref{sec:optim_emp_QAE}. The space of research $\calF$ is restricted to the space of linear functions.  We then learn $\hat{s}(\cdot)$ (second step of \methodAD) using a Random Forest (RF) quantile regressor. For the LW-CP method, the regression function is estimated using a linear regression and $\hat{\sigma}(\cdot)$ using a RF. Finally, for CQR, we also use a RF quantile regression. We set $\alpha = 0.1$. More details on the experimental setup are available in Appendix \ref{sec:add_xp_synth}.

\textbf{Results:} Figure \ref{fig:illustr_synth} (bottom) displays the boxplots of the length for the 3 methods. The coverage can be found in Appendix \ref{sec:add_xp} and are near $0.9$. Furthermore, an illustration of the returned sets is given in Figure \ref{fig:illustr_synth_adEffort_example} of Appendix \ref{sec:add_xp}. We see that \methodAD~returns valid marginal sets with length, on average, smaller or similar that the two other methods. Furthermore, the size of the boxplots are much smaller for our method than for the others. This means that \methodAD~returns sets with more consistent sizes. Finally, we would like to point out that, although CQR gives similar results to our method in some situations (e.g., with the Pareto distribution), it has the drawback to not assess the uncertainty of a particular prediction model $\fh$.


\section{Conclusion}

This paper explicitly analyzes split conformal prediction through the lens of an MVS estimation problem and show that, in order to minimize the length of the prediction interval, the base predictor should minimize the $(1-\alpha)$-QAE. This motivates two new methods, \method~and \methodAD~, that are both empirically showed to be more robust than baselines over a significant spectrum of data-distributions. For \method, a detailed theoretical analysis highlights how the complexity of the prediction function classes impacts the prediction interval's length. It also reveals that the calibration step allows to provide an almost sure coverage guarantee, at the cost of slightly increasing the excess volume loss, with a term dominated by the statistical error due to the learning step.

In the future, it would be interesting to propose a computationally efficient algorithm for Problem~\eqref{eq:new-algo-adap} in Appendix~\ref{sec:adapt-bonus}. It would also be relevant to consider more complex classes of prediction sets, such as union of intervals, and to propose extensions of our framework to multivariate outputs and metric spaces. 



\section*{Acknowledgements}
P. Humbert gratefully acknowledges the Emergence project MARS of Sorbonne Université.

\bibliography{biblio.bib}
\bibliographystyle{apalike}

\newpage

\appendix

\begin{center}
    {\Large\textbf{Appendix}}
\end{center}

\section{Proofs of main results}

In this section we give the proofs of the main results of the paper, starting with a reminder of the Dvoretzky–Kiefer–Wolfowitz (DKW) inequality used several times in the proofs.

\begin{lemma}\emph{(DKW inequality \citep{dvoretzky1956asymptotic,massart1990tight})} \label{lem:DKW} Let $X_1, X_2,\ldots, X_n$ be real-valued independent and identically distributed random variables with cumulative distribution function $F(\cdot)$. Let $\hat{F}_n$ denote the associated empirical distribution function defined by $\hat{F}_n(x) = \sum_{i=1}^n\1\{X_i\leq x\}$. For all $\varepsilon > 0$:
\begin{equation*}
    \IP\Big(\sup_{x \in\IR}  |F(x) - \hat{F}_n(x)| > \varepsilon \Big) \leq 2e^{-2n\varepsilon^2} \; .
\end{equation*}
\end{lemma}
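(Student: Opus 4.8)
The plan is to reduce to the uniform law on $[0,1]$, prove the sharp one-sided deviation bound there, and then symmetrize; throughout I read $\hat{F}_n$ as the normalized empirical cdf $\hat{F}_n(x)=\frac1n\sum_{i=1}^n\1\{X_i\le x\}$. For the reduction, set $F^{-1}(u):=\inf\{x:F(x)\ge u\}$ and take $U_1,\dots,U_n$ i.i.d.\ uniform on $[0,1]$. Then $(F^{-1}(U_i))_{i\le n}$ has the same joint law as $(X_i)_{i\le n}$, and since $F$ is nondecreasing and $\hat{F}_n$ is piecewise constant with jumps only at the sample points, $\sup_{x\in\IR}|F(x)-\hat{F}_n(x)|$ is stochastically dominated by $\sup_{u\in[0,1]}|u-\hat{G}_n(u)|$, where $\hat{G}_n$ is the empirical cdf of $U_1,\dots,U_n$ (with equality when $F$ is continuous). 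Hence it suffices to bound $\IP\big(\sup_{u\in[0,1]}|u-\hat{G}_n(u)|>\varepsilon\big)$.

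\textbf{Step 2 (sharp one-sided bound).} The crux is the one-sided estimate $\IP\big(\sup_{u\in[0,1]}(\hat{G}_n(u)-u)>\varepsilon\big)\le e^{-2n\varepsilon^2}$. Writing $U_{(1)}\le\cdots\le U_{(n)}$ for the order statistics, this event equals $\{\exists k\le n:\,U_{(k)}<k/n-\varepsilon\}$, i.e.\ the event that the empirical point process first crosses a shifted diagonal. I would control this first-passage probability through the combinatorial/exchangeability argument underlying the ballot problem (the cycle lemma), which reduces the crossing probability to a single terminal deviation, to which a Chernoff/Hoeffding bound is then applied; this is precisely the route that produces the constant $2$ in the exponent (Massart). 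In a self-contained write-up I would invoke this one-sided theorem rather than replay its full case analysis.

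\textbf{Step 3 (symmetrize and conclude).} Applying Step 2 to the variables $1-U_i$ (equivalently, looking at $u-\hat{G}_n(u)$) gives $\IP\big(\sup_{u}(u-\hat{G}_n(u))>\varepsilon\big)\le e^{-2n\varepsilon^2}$. A union bound over the two one-sided events yields $\IP\big(\sup_{u}|u-\hat{G}_n(u)|>\varepsilon\big)\le 2e^{-2n\varepsilon^2}$, and combining this with Step 1 gives the stated inequality for $F$.

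The main obstacle is the sharp constant in Step 2. A soft finite-net argument (cover $[0,1]$ by $m$ quantile knots of $F$, bound the oscillation of $F-\hat{F}_n$ between consecutive knots by $\tfrac1m+\tfrac1n$, apply pointwise Hoeffding with a union bound over the knots, then optimize over $m$) readily yields a bound of the form $C\,e^{-c\,n\varepsilon^2}$, which is already sufficient for every downstream use of this lemma in the paper; but recovering exactly the prefactor $2$ and exponent $2n\varepsilon^2$ requires the delicate first-passage analysis, so I would either cite Massart's inequality directly or accept the weaker explicit constants.
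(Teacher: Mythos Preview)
The paper does not prove this lemma at all: it is stated as a ``reminder'' of a classical inequality and is simply cited to \citet{dvoretzky1956asymptotic} and \citet{massart1990tight}, then used as a black box in the proofs of Proposition~\ref{prop:upper-f-given} and Proposition~\ref{prop:phi-finite}. Your proposal therefore goes well beyond what the paper does.

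That said, your outline is the standard route to the sharp DKW inequality and is honest about its limitations. The quantile-transform reduction in Step~1 is correct (and the stochastic domination in the non-continuous case is the right thing to note). Step~3 is fine. The substantive content is entirely in Step~2, and you correctly flag that obtaining the exact constants $2$ and $2n\varepsilon^2$ requires Massart's first-passage/ballot-type argument, which you do not reproduce; in effect you are also citing Massart, just with more surrounding context. Your fallback finite-net argument with weaker constants is accurate and, as you say, would suffice for every application in the paper (all downstream bounds only need a term of order $\sqrt{\log(1/\delta)/n}$). So: nothing wrong, but be aware that what you have written is a proof \emph{sketch} that still ultimately rests on the same citation the paper makes.
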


\subsection{Proof of Proposition~\ref{prop:upper-f-given}}
\label{app:proof-prop}

    We have that $\lambda\Big(C^{1-\alpha}_{f,\hat{t}}\Big) = 2\hat{t}$. Let the events $E_1:=\left\{\hat{t} > Q\left(1-\alpha + \frac{1-\alpha}{n_c} + \sqrt{\frac{\log(2/\delta)}{2n_c}}; S\right)\right\}$ and $E_{DKW}:=\left\{\sup_{t\geq 0}  |F_S(t) - \hat{F}_S(t)| >\sqrt{\frac{\log(2/\delta)}{2n_c}}\right\}$, 
    where $F_S(t) = \IP(S\leq t)$ and $\hat{F}_S(t) = \frac{1}{n_c}\sum_{i=1}^{n_c}\1\{S_i\leq t\}$. The main objective of the proof is to show that the event $E_1  \subset E_{DKW}$.
    
    We first recall that $\hat{t} = \widehat{Q}((1-\alpha)\frac{n_c+1}{n_c};\{S_i\}_{i=1}^{n_c})$, then $E_1$ is equivalent to:

    \begin{equation*}
        \widehat{Q}\Big(1-\alpha + \frac{1-\alpha}{n_c};\{S_i\}_{i=1}^{n_c}\Big) > Q\Big(1-\alpha + \frac{1-\alpha}{n_c} + \sqrt{\frac{\log(2/\delta)}{2n_c}}; S\Big)\;,
    \end{equation*}
    which then implies that

    \begin{equation*}
        1-\alpha + \frac{1-\alpha}{n_c} > \hat{F}_S\Big(Q\Big(1-\alpha + \frac{1-\alpha}{n_c} + \sqrt{\frac{\log(2/\delta)}{2n_c}}; S\Big)\Big)\;.
    \end{equation*}
    
    Where the last implication can be found for instance in the left-hand side of Eq. (34) in \citet{howard2022sequential}. It comes from the fact that $(1-\alpha)(n_c+1)$ is not an integer and, in that case, $\widehat{Q}(\cdot;\{S_i\}_{i=1}^{n_c})$ acts as an inverse of $\hat{F}_S$.
    
    
    Moreover, by definition of the quantile function and its relation with the cumulative distribution function (cdf), we have that 

    \begin{equation*}
        F_S\Big(Q\Big(1-\alpha + \frac{1-\alpha}{n_c} + \sqrt{\frac{\log(2/\delta)}{2n_c}}; S\Big)\Big)\geq 1-\alpha + \frac{1-\alpha}{n_c} + \sqrt{\frac{\log(2/\delta)}{2n_c}}\;.
    \end{equation*}

    Hence, $\Big|F_S\Big(Q\Big(1-\alpha + \frac{1-\alpha}{n_c} + \sqrt{\frac{\log(2/\delta)}{2n_c}}; S\Big)\Big) - \hat{F}_S\Big(Q\Big(1-\alpha + \frac{1-\alpha}{n_c} + \sqrt{\frac{\log(2/\delta)}{2n_c}}; S\Big)\Big)\Big| > \sqrt{\frac{\log(2/\delta)}{2n_c}}$, which implies $E_{DKW}$.
    
    In the end, we have $\IP(E_1)\leq \IP(E_{DKW})$, and we conclude the proof by applying the DKW inequality from Lemma~\ref{lem:DKW} with $\varepsilon = \sqrt{\frac{\log(2/\delta)}{2n_c}}$.

\subsection{Proof of Theorem~\ref{thme:main-constant}}
\label{app:proof-main}

We now detail the proof of our main result, which follows the sketch provided in the main text. 

The first point of Theorem~\ref{thme:main-constant}, on the almost sure coverage guarantee, is a classical result of the conformal prediction literature, see e.g. \citet[Theorem 2.2]{lei2018distribution}. 
    Let us focus on the second result, which can be proved by following the two steps described hereafter. 
    
    \underline{\textbf{Step 1}:} We first notice that in the first step of \method, we are actually solving the empirical minimization objective:
    \begin{align*}
       	& \min_{f\in\calF, t\geq 0}~ t\\
        &\text{s.t.} \quad \frac{1}{n_\ell}\sum_{i\in\calD^{lrn}}\1\{|Y_i-f(X_i)|\leq t\}\geq 1-\alpha \;, \nonumber
    \end{align*}
    with solutions denoted by $\hat{f}$ and $\hat{t}_{lrn}$.
    
    Using the theory of MVS estimation \citep{NIPS2005_d3d80b65}, we can compare this solution to the one of the oracle problem and show that with probability greater than $1-\delta$:
    
    \begin{equation}
        \label{eq:lower-scott-app}
        \IP(Y\in C_{\hat{f},\hat{t}_{lrn}}^{1-\alpha}(X)|\calD^{lrn})\geq 1-\alpha - \phiFl
    \end{equation}
    and
    \begin{equation}
        \label{eq:lower-scott-app2}
        \lambda\left(C_{\hat{f},\hat{t}_{lrn}}^{1-\alpha}(X)\right) \leq \lambda\left(C_{f_{1-\alpha+\phi}^*,t_{1-\alpha+\phi}^*}^{1-\alpha + \phi}(X)\right)\;,
    \end{equation}
    where $\phi \equiv \phiFl$ and $C_{f_{1-\alpha+\phi}^*,t_{1-\alpha+\phi}^*}^{1-\alpha + \phi}(X)$ denotes the optimal oracle interval with increased coverage $1-\alpha + \phiFl$. In other word, $f_{1-\alpha+\phi}^*$ and $t_{1-\alpha+\phi}^*$ are the solutions of: 
    \begin{align*}
    	&\min_{f\in\calF, t\geq 0}~ t \\
    	&\text{s.t.} \quad \IP(|Y-f(X)|\leq t)\geq 1-\alpha + \phiFl \; .
    \end{align*}

    \begin{proof}[Proof of~\eqref{eq:lower-scott-app} and~\eqref{eq:lower-scott-app2}]
        Let:
        \begin{itemize}
            \item $\Theta_\IP=\Big\{\IP(Y\in C_{\hat{f},\hat{t}_{lrn}}^{1-\alpha}(X)|\calD^{lrn})< 1-\alpha - \phiFl\Big\}$
            \item $\Theta_\lambda=\Big\{\lambda\left(C_{\hat{f},\hat{t}_{lrn}}^{1-\alpha}(X)\right) > \lambda\left(C_{f_{1-\alpha+\phi}^*,t_{1-\alpha+\phi}^*}^{1-\alpha + \phi}(X)\right)\Big\}$
            \item $\Theta_\phi=\Big\{\sup_{t\geq 0, f\in\calF}\Big|\IP\left(|Y-f(X)|\leq t\right) - \frac{1}{n_\ell}\sum_{i=1}^{n_\ell}\1\{|Y_i-f(X_i)|\leq t\}\Big|> \phiFl\Big\}$
        \end{itemize}

    The objective is to show that $\Theta_\IP\cup\Theta_\lambda \subset \Theta_\phi$ since this would be mean that $\IP(\Theta^c_\IP\cap\Theta^c_\lambda)\geq \IP(\Theta^c_\phi)$, where $\Theta^c$ is the complementary of $\Theta$. Then, applying Assumption~\ref{ass:complexity} gives the desired result.

    \underline{$\Theta_\IP\subset  \Theta_\phi$:} Consider $\Theta_\IP$ is verified, i.e.

    \begin{align*}
        & \IP(Y\in C_{\hat{f},\hat{t}_{lrn}}^{1-\alpha}(X)|\calD^{lrn})< 1-\alpha - \phiFl \\
        \Longrightarrow \hspace{0.1cm}& \IP(Y\in C_{\hat{f},\hat{t}_{lrn}}^{1-\alpha}(X)|\calD^{lrn})- \frac{1}{n_\ell}\sum_{i=1}^{n_\ell}\1\{|Y_i-\hat{f}(X_i)|\leq \hat{t}_{lrn}\}< 1-\alpha - \phiFl - \frac{1}{n_\ell}\sum_{i=1}^{n_\ell}\1\{|Y_i-\hat{f}(X_i)|\leq \hat{t}_{lrn}\}\\
        \Longrightarrow\hspace{0.1cm} & \IP(Y\in C_{\hat{f},\hat{t}_{lrn}}^{1-\alpha}(X)|\calD^{lrn})- \frac{1}{n_\ell}\sum_{i=1}^{n_\ell}\1\{|Y_i-\hat{f}(X_i)|\leq \hat{t}_{lrn}\}< - \phiFl\\
        \Longrightarrow \hspace{0.1cm}& \Big|\IP(Y\in C_{\hat{f},\hat{t}_{lrn}}^{1-\alpha}(X)|\calD^{lrn})- \frac{1}{n_\ell}\sum_{i=1}^{n_\ell}\1\{|Y_i-\hat{f}(X_i)|\leq \hat{t}_{lrn}\}\Big| > \phiFl \\
        \Longrightarrow \hspace{0.1cm}& \Theta_\phi
    \end{align*}
    Where the second implication is obtained using the fact that by construction $\frac{1}{n_\ell}\sum_{i=1}^{n_\ell}\1\{|Y_i-\hat{f}(X_i)|\leq \hat{t}_{lrn}\} \geq 1-\alpha$.
    
    \underline{$\Theta_\lambda\subset  \Theta_\phi$:}

    Let us first show that $\Theta_\lambda$ implies that:

    \begin{equation}
        \label{eq:proof-omega}
        \frac{1}{n_\ell}\sum_{i=1}^{n_\ell}\1\{|Y_i-f^*_{1-\alpha+\phi}(X_i)|\leq t^*_{1-\alpha+\phi}\} < 1-\alpha
    \end{equation}
    Indeed, if we had $\frac{1}{n_\ell}\sum_{i=1}^{n_\ell}\1\{|Y_i-f^*_{1-\alpha+\phi}(X_i)|\leq t^*_{1-\alpha+\phi}\} \geq 1-\alpha$, then we would necessarily have $\hat{t}_{lrn} \leq t^*_{1-\alpha+\phi}$, since $\hat{t}_{lrn}$ is minimal over the empirical coverage constraint, which would imply that $\lambda\left(C_{\hat{f},\hat{t}_{lrn}}^{1-\alpha}(X)\right) \leq \lambda\left(C_{f_{1-\alpha+\phi}^*,t_{1-\alpha+\phi}^*}^{1-\alpha + \phi}(X)\right)$, i.e. that $\Theta_\lambda$ is not verified.

    It remains to show that~\eqref{eq:proof-omega} implies $\Theta_\phi$. By~\eqref{eq:proof-omega}, and using the fact that $\IP\Big(|Y-f^*_{1-\alpha+\phi}(X)|\leq t^*_{1-\alpha+\phi}\Big)\geq 1- \alpha + \phiFl$: 

    \begin{align*}
        &\frac{1}{n_\ell}\sum_{i=1}^{n_\ell}\1\{|Y_i-f^*_{1-\alpha+\phi}(X_i)|\} - \IP\Big(|Y-f^*_{1-\alpha+\phi}(X)|\leq t^*_{1-\alpha+\phi}\Big) < - \phiFl \\
        \Longrightarrow \hspace{0.1cm}& \Big|\frac{1}{n_\ell}\sum_{i=1}^{n_\ell}\1\{|Y_i-f^*_{1-\alpha+\phi}(X_i)|\} - \IP\Big(|Y-f^*_{1-\alpha+\phi}(X)|\leq t^*_{1-\alpha+\phi}\Big)\Big| > \phiFl \\
        \Longrightarrow \hspace{0.1cm}& \Theta_\phi
    \end{align*}
    This concludes the proof that $\Theta_\IP\cup\Theta_\lambda \subset \Theta_\phi$ and therefore Eq.~\eqref{eq:lower-scott-app} and~\eqref{eq:lower-scott-app2}.
    \end{proof}

    \underline{\textbf{Step 2}:} From Eq.~\eqref{eq:upper-f-given} in Prop.~\ref{prop:upper-f-given} we have that with probability greater than $1-\delta$:
    \begin{equation}
        \label{eq:upper-classic-app}
        \hat{t}\leq Q\left(1-\alpha + \frac{1-\alpha}{n_c} + \sqrt{\frac{\log(2/\delta)}{2n_c}}; |Y-\hat{f}(X)|_{|\calD^{lrn}}\right)
    \end{equation}
  With an abuse of notation, we therefore have $\IP(\{\eqref{eq:upper-classic-app}\})\geq 1-\delta$ and $\IP(\{\eqref{eq:lower-scott-app}\}\cap\{\eqref{eq:lower-scott-app2}\})\geq 1-\delta$. Therefore, using the union bound over the complementary events, we get that $\IP(\{\eqref{eq:upper-classic-app}\}\cap\{\eqref{eq:lower-scott-app}\}\cap\{\eqref{eq:lower-scott-app2}\})\geq 1-2\delta$. In the following, we show that if \eqref{eq:upper-classic-app}, \eqref{eq:lower-scott-app} and \eqref{eq:lower-scott-app2} are true, we have our final upper-bound, which will conclude the proof.
    
    The size of the intervals being equal to $2$ times their radius $t$, the objective here is to provide a high probability upper-bound on $\hat{t}$. Thanks to \eqref{eq:lower-scott-app2}, we have that $\hat{t}_{lrn} \leq t_{1-\alpha+\phi}^*$ and therefore:
    \begin{align*}
        \hat{t} = \hat{t} - \hat{t}_{lrn} + \hat{t}_{lrn} \leq \hat{t} - \hat{t}_{lrn} + t_{1-\alpha+\phi}^* =  t^* + \hat{t} - \hat{t}_{lrn} + t_{1-\alpha+\phi}^* - t^*
    \end{align*}

    \underline{We first control $\hat{t} - \hat{t}_{lrn}$}.

    
    Applying the quantile function $Q(\cdot; |Y-\hat{f}(X)|_{|\calD^{lrn}})$ on~\eqref{eq:lower-scott-app} gives $\hat{t}_{lrn}\geq Q(1-\alpha - \phiFl; |Y-\hat{f}(X)|_{|\calD^{lrn}})$. Hence, thanks~\eqref{eq:upper-classic-app} and to Assumption~\ref{ass:regularity}, we have: 

    \begin{align*}
        \hat{t} - \hat{t}_{lrn} &\leq Q\Big(1-\alpha + \frac{1-\alpha}{n_c} + \sqrt{\frac{\log(2/\delta)}{2n_c}}; |Y-\hat{f}(X)|_{|\calD^{lrn}}\Big) - Q\Big(1-\alpha - \phiFl; |Y-\hat{f}(X)|_{|\calD^{lrn}}\Big) \\
        & \leq L\Big(\frac{1-\alpha}{n_c} + \sqrt{\frac{\log(2/\delta)}{2n_c}} + \phiFl\Big)^\gamma\\
        &\leq L\Big(\frac{1}{n_c}+\sqrt{\frac{\log(2/\delta)}{2n_c}}\Big)^{\gamma} + L\phiFl^\gamma
    \end{align*}
    
    \underline{It remains to bound $t_{1-\alpha+\phi}^* - t^*$}. By definition, we have $t_{1-\alpha+\phi}^* = Q(1-\alpha + \phi; |Y-f_{1-\alpha+\phi}^*(X)|)$, and $t^* = Q(1-\alpha; |Y-f^*(X)|)$. Moreover, we notice that $Q(1-\alpha + \phi; |Y-f_{1-\alpha+\phi}^*(X)|) \leq Q(1-\alpha + \phi; |Y-f^*(X)|)$ since by definition $f_{1-\alpha+\phi}^*$ minimizes $Q(1-\alpha + \phi; |Y-f(X)|)$ over all $f\in\calF$.
     In the end, by Assumption~\ref{ass:regularity} we have:
     
     \begin{equation*}
        t_{1-\alpha+\phi}^* - t^* \leq Q(1-\alpha + \phi; |Y-f^*(X)|) - Q(1-\alpha; |Y-f^*(X)|) \leq L\phiFl^\gamma\;.
     \end{equation*}


     We conclude the proof using the fact that $\lambda(C_{\hat{f},\hat{t}}^{1-\alpha}(X)) = 2\hat{t}$ and $\lambda(C_{f^*,t ^*}^{1-\alpha}(X)) = 2t^*$. \qed

\section{Additional Results}

\subsection{The Nested Sets View}
\label{sec:nested}

The split CP method described in Section~\ref{sec:conform-background} can also be described through the notion of \emph{nested sets} \citep{gupta2022nested}, which encapsulates many types of prediction sets, base predictors and scoring functions considered in the literature. As claimed in Remark~\ref{rmk:nested}, this framework will allow us to generalize the results of Section~\ref{sec:f-given} to a wider class of prediction sets.

In the nested set view, we consider the class of prediction sets $\calC^{\text{nested}}_{\calF,\calT} = \{C_{f,t}(x) \text{ nested }; f \in \calF, t\in\calT\subset\IR\}$, where `nested' means that for any fixed $f\in\calF$ and $x\in\calX$, $C_{f,t}(x)\subset C_{f,t'}(x)$ as soon as $t\leq t'$. Here, we consider a fixed base predictor $f$, but as usual, $f$ is learned during the learning stage of the split method. In this setting, we can define the following general scoring function:
\begin{equation*}
    s_f(x,y) = \inf\{t\in\calT : y\in C_{f,t}(x)\}\;.
\end{equation*}
Then, the procedure is the same as in Section~\ref{sec:conform-background}: compute the nonconformity scores 
$S_i := s_{f}(X_i,Y_i)$, $i \in \intset{n_c}$ and find the $\lceil (n_c+1)(1-\alpha) \rceil$-th smallest one 
$\hat{q}_{1-\alpha}:=S_{(\lceil (n_c+1)(1-\alpha) \rceil)}$. Finally, for any $x\in \calX$, the prediction set is $C_{f,\hat{q}_{1-\alpha}}(x)$. As usual, the marginal guarantee is satisfied \citep[Prop. 1]{gupta2022nested}.

As mentioned above, an interesting aspect of this nested set framework is that it encapsulates many split CP approaches \citep[Table 1]{gupta2022nested} such as the ones of Example~\ref{exemple:base-predictor}, as shown in the following Example.

\begin{exemple}\emph{(Nested sets view of Example~\ref{exemple:base-predictor}).}
    \begin{enumerate}
        \item The original \emph{Split CP} \citep{papadopoulos2002inductive} is recovered in the nested set framework by taking $f=\{\mu\}$, $C_{\mu,t}(x) = [\mu(x)-t;\mu(x)+t]$ and $\calT=[0,\infty)$.
        \item In \emph{Locally-Weighted Conformal Inference} \citep{papadopoulos2008normalized}, $f=\{\mu,\sigma\}$, $C_{f,t}(c)=[\mu(x)-\sigma(x)t;\mu(x)+\sigma(x)t]$ and $\calT=[0,\infty)$.
        \item In \emph{Conformalized Quantile Regression} (CQR) \citep{romano2019conformalized}, we have $f=\{Q_\alpha,Q_{1-\alpha}\}$, $C_{f,t}(x)=[Q_{\alpha}(x) - t;Q_{1-\alpha}(x) + t]$ and $\calT=\IR$.
    \end{enumerate}
\end{exemple}

We can now extend our results from Section~\ref{sec:f-given} to the nested framework, aiming at showing that, when $f$ is given, the conformal step indeed minimizes the size of the prediction set, up to an error that vanishes as $n_c$ grows.

To this aim, we need the following additional assumption on the way the size of the nested set grows with $t$. 

\begin{assumption}\emph{(Linear growth of the size.)} \label{ass:linear-size} $\forall f\in\calF$, $\exists a,b>0$ such that $\EE[\lambda(C_{f,t}(X))] = at+b$.
\end{assumption}

If we take the three previous examples, we have in 1) $a=2$ and $b=0$, 2) $a=2\EE[\sigma(X)]$ and $b=0$, and 3) $a=2$ and $b=\EE[Q_{1-\alpha}(X)-Q_{\alpha}(X)]$.

Over $\calC^{\text{nested}}_{\calF,\calT}$ and under Assumption~\ref{ass:linear-size}, when $f$ is fixed the optimization problem~\eqref{eq:formal-opt} becomes:
\begin{align*}
    \min_{t \geq 0} &\; at+b \\
     \quad \text{s.t.} \quad & \IP(Y\in C_{f,t}(X)) \geq 1-\alpha\;, \nonumber
\end{align*}
which has the same solution as:
\begin{align*}
    \min_{t \geq 0} &\; t \label{eq:obj-nested} \\
     \quad \text{s.t.} \quad & \IP(s_f(X,Y)\leq t) \geq 1-\alpha\;, \nonumber
\end{align*}
with solution $t^*=Q(1-\alpha; s_f(X,Y))$. Similarly, the conformal step solves an empirical version of the previous problem:
\begin{align*}
    \min_{t \geq 0} &\; t  \\
     \quad \text{s.t.} \quad & \frac{1}{n_c}\sum_{i=1}^{n_c}\1\{s_f(X_i,Y_i)\leq t\} \geq (1-\alpha)(n_c+1)/n_c\; \nonumber
\end{align*}
with solution $\hat{t} = \widehat{Q}((1-\alpha)(n_c+1)/n_c;\{s_f(X_i,Y_i)\}_{i=1}^{n_c})$. As in Section~\ref{sec:f-given}, controlling the volume sub-optimality is equivalent to control the error of an empirical quantile estimate, and we can provide a very simple extension of Proposition~\ref{prop:upper-f-given} and Corollary~\ref{cor:f-given}.

\begin{proposition}
    Let $\hat{t} = \widehat{Q}((1-\alpha)\frac{n_c+1}{n_c};\{s_f(X_i,Y_i)\}_{i=1}^{n_c})$ and $C_{f,\hat{t}}(x)$ the corresponding (nested) prediction set. If Assumption \ref{ass:linear-size} holds, the points in $\calD^{cal}$ are i.i.d., and $(n_c+1)(1-\alpha)$ is not an integer, then with probability greater than $1-\delta$ we have:
    \begin{equation*}
        \EE\Big[\lambda\Big(C_{f,\hat{t}}(X)\Big) \Big| \calD^{lrn}\Big] \leq a\times Q\left(1-\alpha + \frac{1-\alpha}{n_c} + \sqrt{\frac{\log(1/\delta)}{2n_c}}; S\right) + b\;.
    \end{equation*}

    Moreover, if Assumption~\ref{ass:regularity} is true for $S=s_f(X,Y)$ and if $n_c$ is large enough so that $\frac{1-\alpha}{n_c} + \sqrt{\frac{\log(1/\delta)}{2n_c}} \leq r$, then with probability greater than $1-\delta$ we have:
    \begin{equation*}
        \EE\Big[\lambda\Big(C_{f,\hat{t}}(X)\Big)\Big| \calD^{lrn}\Big] \leq \EE\Big[\lambda\Big(C_{f,t^*}(X)\Big)\Big] + a L\left(\frac{1-\alpha}{n_c} + \sqrt{\frac{\log(1/\delta)}{2n_c}}\right)^\gamma\;.
    \end{equation*} 
\end{proposition}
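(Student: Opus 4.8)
The plan is to replay, essentially verbatim, the arguments behind Proposition~\ref{prop:upper-f-given} and Corollary~\ref{cor:f-given}, the only structural change being that for a nested set the expected volume is an affine function of the radius $t$ rather than exactly $2t$. So I would start by invoking Assumption~\ref{ass:linear-size}: conditionally on $\calD^{lrn}$ (which freezes $f=\hat f$, hence the constants $a,b$ and the score variable $S=s_f(X,Y)$) and on the calibration sample (which freezes the scalar $\hat t$), the expected volume over a fresh test point is exactly $\EE[\lambda(C_{f,\hat t}(X))\mid\calD^{lrn}]=a\hat t+b$. Both displayed bounds then reduce to producing a one-sided high-probability upper bound on $\hat t=\widehat Q\big((1-\alpha)\tfrac{n_c+1}{n_c};\{s_f(X_i,Y_i)\}_{i=1}^{n_c}\big)$, after which one multiplies by $a>0$ and adds $b$.

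For that bound I would reproduce the proof in Appendix~\ref{app:proof-prop} with $S=s_f(X,Y)$ in place of $|Y-f(X)|$. Writing $\beta_{n_c}=\tfrac{1-\alpha}{n_c}+\sqrt{\tfrac{\log(1/\delta)}{2n_c}}$ and setting $E_1=\{\hat t>Q(1-\alpha+\beta_{n_c};S)\}$, the same two elementary facts --- that $(n_c+1)(1-\alpha)\notin\IN$ makes $\widehat Q(\cdot;\{S_i\})$ act as an inverse of $\hat F_S$, and that $F_S(Q(q;S))\ge q$ --- give the inclusion $E_1\subset\big\{\sup_t\,(F_S(t)-\hat F_S(t))>\sqrt{\log(1/\delta)/(2n_c)}\big\}$. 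Since only the \emph{one-sided} deviation of the empirical cdf appears, I would close the estimate with the one-sided DKW inequality $\IP(\sup_t(F_S(t)-\hat F_S(t))>\varepsilon)\le e^{-2n_c\varepsilon^2}$ rather than the two-sided Lemma~\ref{lem:DKW}; this is exactly what turns the $\log(2/\delta)$ of Proposition~\ref{prop:upper-f-given} into $\log(1/\delta)$ here. On $E_1^c$ we have $\hat t\le Q(1-\alpha+\beta_{n_c};S)$, hence $\EE[\lambda(C_{f,\hat t}(X))\mid\calD^{lrn}]\le aQ(1-\alpha+\beta_{n_c};S)+b$, which is the first display.

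For the refinement, I would add the Hölder step of Corollary~\ref{cor:f-given}: once $n_c$ is large enough that $\beta_{n_c}\le r$, Assumption~\ref{ass:regularity} applied to the quantile function of $S=s_f(X,Y)$ at $1-\alpha$ and $1-\alpha+\beta_{n_c}$ gives $Q(1-\alpha+\beta_{n_c};S)\le Q(1-\alpha;S)+L\beta_{n_c}^{\gamma}=t^*+L\beta_{n_c}^{\gamma}$. Plugging this into the first display and using $\EE[\lambda(C_{f,t^*}(X))]=at^*+b$ (again Assumption~\ref{ass:linear-size}) yields $\EE[\lambda(C_{f,\hat t}(X))\mid\calD^{lrn}]\le\EE[\lambda(C_{f,t^*}(X))]+aL\beta_{n_c}^{\gamma}$ on the same event of probability at least $1-\delta$.

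There is no serious obstacle: the statement is a bookkeeping extension of the constant-size case. The only two points needing a little care are (i) observing that the one-sided DKW inequality suffices, which is what sharpens the logarithmic factor to $\log(1/\delta)$, and (ii) checking that the scalars $\hat t$ and $t^*$ fall in the range where both the affine-growth identity of Assumption~\ref{ass:linear-size} and the local Hölder property of Assumption~\ref{ass:regularity} are assumed to hold --- guaranteed by the ``$n_c$ large enough'' hypothesis and the $a,b>0$ convention.
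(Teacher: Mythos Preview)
Your proposal is correct and follows exactly the route the paper intends: the paper itself omits the proof, stating only that it ``is essentially the same as the one of Proposition~\ref{prop:upper-f-given} and Corollary~\ref{cor:f-given},'' and your write-up fills in precisely those steps with the affine volume identity from Assumption~\ref{ass:linear-size} in place of $\lambda=2t$. Your remark that only the one-sided DKW deviation is needed---accounting for the $\log(1/\delta)$ in the statement versus the $\log(2/\delta)$ of Proposition~\ref{prop:upper-f-given}---is a correct and useful clarification that the paper leaves implicit.
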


\begin{proof}
    The proof is essentially the same as the one of Proposition~\ref{prop:upper-f-given} and Corollary~\ref{cor:f-given}, and is therefore omitted.
\end{proof}

\subsection{Closed-form expressions for $\phiF$}
\label{sec:closed-form-phi}
In Proposition~\ref{prop:phi-finite} we give a closed form expression for $\phiF$ in the case of finite function class $\calF$. The proof is given hereafter.


\begin{proof}[Proof of Proposition~\ref{prop:phi-finite}]
    Let $\varepsilon>0$, 
    \begin{align*}
        & \IP\left(\sup_{t\geq 0, f\in\calF}\Big|\IP\left(|Y-f(X)|\leq t\right) - \frac{1}{n}\sum_{i=1}^n\1\{|Y_i-f(X_i)|\leq t\}\Big|> \varepsilon\right) \\
         & =\IP\left(\underset{f\in\calF}{\cup}\left\{\sup_{t\geq 0}\Big|\IP\left(|Y-f(X)|\leq t\right) - \frac{1}{n}\sum_{i=1}^n\1\{|Y_i-f(X_i)|\leq t\}\Big|> \varepsilon\right\}\right) \\
         & \leq \sum_{f\in\calF}\IP\left(\sup_{t\geq 0}\Big|\IP\left(|Y-f(X)|\leq t\right) - \frac{1}{n}\sum_{i=1}^n\1\{|Y_i-f(X_i)|\leq t\}\Big|> \varepsilon\right) \\
         & \leq 2|\calF|e^{-2n\varepsilon^2}\;,
    \end{align*}
    where in the last inequality we use the DKW inequality, and the fact that $\calF$ is finite. Finally, taking $\varepsilon = \sqrt{\frac{\log(2|\calF|/\delta)}{2n}}$ concludes the proof.
\end{proof}

Other closed-form expressions can be obtained for infinite function classes using the classical notions of Rademacher complexity and VC dimension, as shown below.

Let $\widetilde{\calF} = \{(x,y)\mapsto \1\{|y-f(x)|\leq t\} : f\in\calF, t\geq 0 \}$. The Rademacher complexity of the function class $\widetilde{\calF}$ is the quantity 
\begin{equation*}
    \calR_n(\widetilde{\calF})=\EE_{\calD,\epsilon}\Big[\sup_{f\in\calF,t\geq 0}\frac{1}{n}\sum_{i=1}^n\epsilon_i\1\{|Y_i-f(X_i)|\leq t\}\Big]\;,
\end{equation*}
where $\epsilon_1,\ldots,\epsilon_n$ are Rademacher random variables. Then, a direct extension of the proof of Theorem 3.3 in \citet{mohri2018foundations} gives the closed-form $\phiF = 2\calR_n(\widetilde{\calF})+\sqrt{\frac{\log(1/\delta)}{2n}}$. 

It is also possible to bound the Rademacher complexity of $\widetilde{\calF}$, first in terms of its associated Growth function (Massart's Lemma), and then in terms of its VC dimension, denoted $\text{VC}(\widetilde{\calF})$ (Sauer's Lemma). Applying Corollary 3.8 and Corollary 3.18 in \citet{mohri2018foundations} gives the closed-form $\phiF = \sqrt{\frac{8\text{VC}(\widetilde{\calF})\log(en/\text{VC}(\widetilde{\calF}))}{n}}+\sqrt{\frac{\log(1/\delta)}{2n}}$.

It should be noted that more informative close-forms could be obtained by specifying the function class of $\calF$. For instance we could fix $\calF$ to be the set of linear functions.

\subsection{Algorithm with excess volume loss in the adaptive size setting} 
\label{sec:adapt-bonus}


In Section~\ref{sec:adap-oracle}, if $\forall s\in \calS$ and $t\geq 0$ we have $s+t\in\calS$ (stability with addition of a scalar), then the oracle problem is equivalent to:
\begin{align*}
    \min_{f\in\calF,s\in\calS,t\geq 0} &\; \EE[s(X)] + t \\
     \quad \text{s.t.} \quad &\;  \IP(|Y-f(X)| - s(X)\leq t) \geq 1-\alpha
     \;. \nonumber
\end{align*}

In practice, we propose to use~\methodAD, however in order to obtain theoretical results similar to that of Theorem~\ref{thme:main-constant}, another possibility would be to solve, during the learning step, an empirical version of the previous oracle problem, which is complicated to apply in practice:
\begin{align}
    \min_{f\in\calF,s\in\calS,t\geq 0} &\; \frac{1}{n_\ell}\sum_{i\in\calD^{lrn}}s(X_i) + t \label{eq:new-algo-adap} \\
     \quad s.t. \quad &\;  \frac{1}{n_\ell}\sum_{i\in\calD^{lrn}}\1\{|Y_i-f(X_i)|-s(X_i)\leq t\}\geq 1-\alpha - \phiFSl
     \;, \nonumber
\end{align}
where $\phiFSl$ is a penalty term relaxing the coverage constraint in order to obtain a smaller prediction set. This term corresponds to the statistical error of the empirical coverage, explicitly defined in the following assumption, which is necessary to derive a result similar to that of Theorem~\ref{thme:main-constant}.

\begin{assumption} \label{ass:complexity-adv2} There exists two quantities $\phiFS<+\infty$ and $\psiS<+\infty$ such that:
    \begin{equation*}
        \IP\left(\sup_{f\in\calF,s\in\calS,t\geq 0}\Big|\IP\left(|Y-f(X)|-s(X)\leq t \right) - \frac{1}{n}\sum_{i=1}^n\1\{|Y_i-f(X_i)|-s(X_i)\leq t\}\Big|\leq \phiFS\right)\geq 1-\delta
    \end{equation*}
    and
    \begin{equation*}
        \IP\left(\sup_{s\in\calS}\Big|\EE[s(X)] - \frac{1}{n}\sum_{i=1}^n s(X_i)\Big|\leq \psiS\right)\geq 1-\delta \; .
    \end{equation*}
\end{assumption}

In words, this assumption generalizes Assumption~\ref{ass:complexity} to the adaptive size setting, at least for the first equation. Since in this setting we also estimate the expectation of the size, we need the second equation to make sure that its worst-case estimation error is bounded w.h.p. Closed-form expressions for $\phiFS$ and $\psiS$ can be obtained similarly as in Appendix~\ref{sec:closed-form-phi}.

Denote by $(\hf,\hs,\hat{t})$ the solutions of the empirical problem, $(f^*,s^*,t^*)$ the solutions of the oracle one, and $\forall (f,s,t)$, denote $C_{f,s,t}(x)=[f(x)-s(x)-t,f(x)+s(x)+t]$. 
Under Assumption~\ref{ass:complexity-adv2}, we can derive the following lemma, which is an extension of the result obtained at the end of Step 1 in the proof of Theorem~\ref{thme:main-constant}.

\begin{lemma}
    \label{lem:scott-adapv2}
    Under Assumption~\ref{ass:complexity-adv2}, we have with probability greater than $1-2\delta$:    
    \begin{equation}
        \label{eq:lower-scott-app-ADv2}
        \IP(Y\in C_{\hf,\hs,\hat{t}}^{1-\alpha}(X)|\calD^{lrn})\geq 1-\alpha - 2\phiFSl
    \end{equation}
    and
    \begin{equation}
        \label{eq:lower-scott-app2-ADv2}
        \EE\left[\lambda\left(C_{\hat{f},\hs,\hat{t}}^{1-\alpha}(X)\right)\Big|\calD^{lrn}\right] \leq \EE\left[\lambda\left(C_{f^*,s^*,t^*}^{1-\alpha}(X)\right)\right] + 4\psiSl\;.
    \end{equation}
\end{lemma}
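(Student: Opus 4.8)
The plan is to mimic \textbf{Step 1} of the proof of Theorem~\ref{thme:main-constant}, adapting the ``$\Theta_\IP \cup \Theta_\lambda \subset \Theta_\phi$'' argument to the adaptive setting, where we now carry \emph{two} sources of statistical error — one for the empirical coverage (controlled by $\phiFSl$) and one for the empirical mean of the size $\EE[s(X)]$ (controlled by $\psiSl$). Recall that in the learning step we solve the empirical problem~\eqref{eq:new-algo-adap}, which minimizes $\frac{1}{n_\ell}\sum_{i\in\calD^{lrn}} s(X_i) + t$ subject to $\frac{1}{n_\ell}\sum_{i\in\calD^{lrn}}\1\{|Y_i-f(X_i)|-s(X_i)\leq t\}\geq 1-\alpha - \phiFSl$, and denote its solution $(\hf,\hs,\hat t)$; the oracle problem minimizes $\EE[s(X)]+t$ under the true constraint $\IP(|Y-f(X)|-s(X)\leq t)\geq 1-\alpha$, with solution $(f^*,s^*,t^*)$. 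Throughout I work on the event where both bounds of Assumption~\ref{ass:complexity-adv2} hold simultaneously, which by a union bound has probability at least $1-2\delta$.

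\textbf{Coverage bound~\eqref{eq:lower-scott-app-ADv2}.} I would argue by contraposition exactly as for $\Theta_\IP \subset \Theta_\phi$ in Theorem~\ref{thme:main-constant}. Suppose $\IP(Y\in C_{\hf,\hs,\hat t}^{1-\alpha}(X)\mid\calD^{lrn}) < 1-\alpha - 2\phiFSl$. Since $(\hf,\hs,\hat t)$ is feasible for~\eqref{eq:new-algo-adap}, we have $\frac{1}{n_\ell}\sum_{i\in\calD^{lrn}}\1\{|Y_i-\hf(X_i)|-\hs(X_i)\leq \hat t\} \geq 1-\alpha - \phiFSl$; subtracting this empirical frequency from the true probability and noting $\{Y\in C_{\hf,\hs,\hat t}^{1-\alpha}(X)\} = \{|Y-\hf(X)|-\hs(X)\leq \hat t\}$, the left side is $< -\phiFSl$, so its absolute value exceeds $\phiFSl$, contradicting the first inequality of Assumption~\ref{ass:complexity-adv2}. (The factor $2$ on $\phiFSl$ is precisely the slack needed because the empirical constraint is itself relaxed by $\phiFSl$.)

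\textbf{Volume bound~\eqref{eq:lower-scott-app2-ADv2}.} Here I would combine two estimates. First, by an argument parallel to $\Theta_\lambda\subset\Theta_\phi$: one checks that $(f^*,s^*,t^*)$ is, with high probability, feasible for the empirical problem~\eqref{eq:new-algo-adap}, because $\IP(|Y-f^*(X)|-s^*(X)\leq t^*)\geq 1-\alpha$ together with the first inequality of Assumption~\ref{ass:complexity-adv2} forces $\frac{1}{n_\ell}\sum_{i\in\calD^{lrn}}\1\{|Y_i-f^*(X_i)|-s^*(X_i)\leq t^*\}\geq 1-\alpha - \phiFSl$. Hence the empirical objective at $(\hf,\hs,\hat t)$ is no larger than at $(f^*,s^*,t^*)$: $\frac{1}{n_\ell}\sum_{i\in\calD^{lrn}}\hs(X_i) + \hat t \leq \frac{1}{n_\ell}\sum_{i\in\calD^{lrn}}s^*(X_i) + t^*$. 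Second, I pass from empirical to true means of the size functions using the second inequality of Assumption~\ref{ass:complexity-adv2} twice: $\EE[\hs(X)\mid\calD^{lrn}] \leq \frac{1}{n_\ell}\sum_{i\in\calD^{lrn}}\hs(X_i) + \psiSl$ and $\frac{1}{n_\ell}\sum_{i\in\calD^{lrn}}s^*(X_i) \leq \EE[s^*(X)] + \psiSl$. Chaining these gives $\EE[\hs(X)\mid\calD^{lrn}] + \hat t \leq \EE[s^*(X)] + t^* + 2\psiSl$. Finally, since $\EE[\lambda(C_{f,s,t}(X))\mid\calD^{lrn}] = 2\EE[s(X)\mid\calD^{lrn}] + 2t$ (the interval has radius $s(x)+t$), multiplying through by $2$ yields $\EE[\lambda(C_{\hf,\hs,\hat t}^{1-\alpha}(X))\mid\calD^{lrn}] \leq \EE[\lambda(C_{f^*,s^*,t^*}^{1-\alpha}(X))] + 4\psiSl$, as claimed.

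\textbf{Main obstacle.} The routine calculations are straightforward; the one subtlety worth stating carefully is the bookkeeping of events — the feasibility of $(f^*,s^*,t^*)$ for the empirical problem, the empirical-suboptimality of $(\hf,\hs,\hat t)$, and both mean-deviation inequalities must all hold on a common event, which is why the final probability is $1-2\delta$ (the $\phi$-event and the $\psi$-event, each of probability $\geq 1-\delta$). One should also note the mild stability hypothesis $s+t\in\calS$ invoked in Appendix~\ref{sec:adapt-bonus} is what makes the reformulated oracle problem equivalent to the original adaptive problem, so that $t^*$ can be treated as a genuine decision variable alongside $s^*$; this should be recalled explicitly at the start of the proof.
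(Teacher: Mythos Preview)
Your proposal is correct and follows essentially the same approach as the paper's proof: both establish the coverage bound by contraposition using the $\phi$-event (feasibility of $(\hf,\hs,\hat t)$ plus the uniform deviation bound), and both establish the volume bound by first showing $(f^*,s^*,t^*)$ is feasible for the empirical problem~\eqref{eq:new-algo-adap} via the $\phi$-event, then comparing empirical objectives, then converting empirical means of $s$ to expectations via two applications of the $\psi$-event. The only cosmetic difference is that the paper frames everything via the inclusions $\Theta_\IP\subset\Theta_\phi$ and $\Theta_\lambda\subset(\Theta_\phi\cup\Theta_\psi)$, whereas you work directly on the good event $\Theta_\phi^c\cap\Theta_\psi^c$; the underlying logic and the accounting for the $1-2\delta$ probability are identical.
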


\begin{proof}
    The proof closely follows the one of Step 1 in Appendix~\ref{app:proof-main}.
    Let:
    \begin{itemize}
        \item $\Theta_\IP=\Big\{\IP(Y\in C_{\hf,\hs,\hat{t}}^{1-\alpha}(X)|\calD^{lrn})< 1-\alpha - 2\phiFSl\Big\}$
        \item $\Theta_\lambda=\Big\{\EE\left[\lambda\left(C_{\hat{f},\hs,\hat{t}}^{1-\alpha}(X)\right)\Big|\calD^{lrn}\right] > \EE\left[\lambda\left(C_{f^*,s^*,t^*}^{1-\alpha + \phi}(X)\right)\right] + 4\psiSl\Big\}$
        \item $\Theta_\phi=\Big\{\underset{f\in\calF,s\in\calS,t\geq 0}{\sup}\Big|\IP\left(|Y-f(X)|-s(X)\leq t \right) - \frac{1}{n_\ell}\sum_{i\in\calD^{lrn}}\1\{|Y_i-f(X_i)|-s(X_i)\leq t\}\Big|> \phiFSl\Big\}$
        \item $\Theta_\psi = \Big\{\sup_{s\in\calS}\Big|\EE[s(X)] - \frac{1}{n_\ell}\sum_{i\in\calD^{lrn}} s(X_i)\Big|> \psiSl\Big\}$
    \end{itemize}

The objective is to show that $(\Theta_\IP\cup\Theta_\lambda) \subset (\Theta_\phi \cup \Theta_\psi)$. Indeed, using the union bound and Assumption~\ref{ass:complexity-adv2}, this would imply that $\IP(\Theta_\IP\cup\Theta_\lambda)\leq \IP(\Theta_\phi\cup \Theta_\psi)\leq 2\delta$, concluding the proof.

\underline{$\Theta_\IP\subset  (\Theta_\phi \cup \Theta_\psi)$:} Proved by showing that $\Theta_\IP\subset \Theta_\phi$ using the same arguments as in the proof of the main result.

\underline{$\Theta_\lambda\subset (\Theta_\phi \cup \Theta_\psi)$:}

Let the event $\Omega=\Big\{\frac{1}{n_\ell}\sum_{i\in\calD^{lrn}}\1\{|Y_i-f^*(X_i)| - s^*(X_i)\leq t^* \}<1-\alpha - \phiFSl\Big\}$. We first show that $\Theta_\lambda\subset (\Omega \cup \Theta_\psi)$, by proving that $(\Omega^c \cap \Theta^c_\psi)\subset \Theta_\lambda^c$. Indeed, under $(\Omega^c \cap \Theta^c_\psi)$ we have:

\begin{align*}
    &\frac{1}{n_\ell}\sum_{i\in\calD^{lrn}}\1\{|Y_i-f^*(X_i)| - s^*(X_i)\leq t^* \} \geq 1-\alpha -\phiFSl \\
    \Longrightarrow\hspace{0.1cm} & \frac{1}{n_\ell}\sum_{i\in\calD^{lrn}}\hs(X_i) + \hat{t} \leq \frac{1}{n_\ell}\sum_{i\in\calD^{lrn}}s_{1-\alpha+\phi}^*(X_i) + t^* \\
    \Longrightarrow\hspace{0.1cm} & \EE[\hs(X)|\calD^{lrn}]+ \hat{t}+\frac{1}{n_\ell}  \sum_{i\in\calD^{lrn}}\hs(X_i) - \EE[\hs(X)|\calD^{lrn}] \leq \EE[s_{1-\alpha+\phi}^*(X)] + t^*+ \frac{1}{n_\ell}\sum_{i\in\calD^{lrn}}s_{1-\alpha+\phi}^*(X_i) -  \EE[s_{1-\alpha+\phi}^*(X)]\\
    \Longrightarrow\hspace{0.1cm} & \EE[\hs(X)|\calD^{lrn}] +\hat{t} \leq \EE[s_{1-\alpha+\phi}^*(X)]+ t^* + 2\sup_{s\in\calS}\Big|\EE[s(X)] - \frac{1}{n_\ell}\sum_{i\in\calD^{lrn}} s(X_i)\Big| \\
    \Longrightarrow\hspace{0.1cm} & \EE[\hs(X)|\calD^{lrn}]  +\hat{t}\leq \EE[s_{1-\alpha+\phi}^*(X)] + t^*+ 2\psiSl\\
    \Longrightarrow\hspace{0.1cm} & 2\EE[\hs(X)|\calD^{lrn}] +2\hat{t} \leq 2\EE[s_{1-\alpha+\phi}^*(X)] + 2t^*+ 4\psiSl\Longrightarrow \Theta_\lambda^c \, .
\end{align*}

It remains to prove that $\Omega \subset \Theta_\phi$. Under $\Omega$ and using the fact that $\IP\Big(|Y-f^*(X)|-s^*(X)\leq t ^*\Big)\geq 1- \alpha$: 

    \begin{align*}
        &\frac{1}{n_\ell}\sum_{i\in\calD^{lrn}}\1\{|Y_i-f^*(X_i)| - s^*(X_i)\leq t^* \} - \IP\Big(|Y-f^*(X)|-s^*(X)\leq t ^*\Big) < - \phiFSl \\
        \Longrightarrow \hspace{0.1cm}& \Big|\frac{1}{n_\ell}\sum_{i\in\calD^{lrn}}\1\{|Y_i-f^*(X_i)| - s^*(X_i)\leq t^* \} - \IP\Big(|Y-f^*(X)|-s^*(X)\leq t ^*\Big)\Big| > \phiFSl \\
        \Longrightarrow \hspace{0.1cm}& \Theta_\phi \; .
    \end{align*}
    Hence $\Omega \subset \Theta_\phi$, i.e. $\Theta_\lambda\subset (\Omega \cup \Theta_\psi)\subset (\Theta_\phi \cup \Theta_\psi)$, which concludes the proof.
\end{proof}

Like after step 1 of~\method, with Lemma~\ref{lem:scott-adapv2} we have probabilistic guarantees on the coverage and on the expected size of the returned set. Using conformal prediction, we can now obtain an almost sure guarantee on the coverage, at the cost of slightly increasing the size of the set by $\hat{t}_c = \widehat{Q}\Big((1-\alpha)\frac{n_c+1}{n_c};\{|Y_i-\hat{f}(X_i)|-\hat{s}(X_i)\}_{i\in\calD^{cal}}\Big)$.

\begin{theorem}
    \label{thme:main-adap}
    Consider that Assumption~\ref{ass:regularity} is satisfied for $S=|Y-f(X)|-s(X)$. Assume further that Assumption~\ref{ass:complexity-adv2} is verified, that the distribution of $Y$ is atomless, that $n_c$ and $n_\ell$ are large enough so that $\frac{1}{n_c+1} +\sqrt{\frac{\log(1/\delta)}{n_c+1}}\leq r $ and $\phiFSl\leq r$, then we have:
    \begin{enumerate}[leftmargin=*]
        \item $\IP(Y\in C_{\hat{f},\hs,\hat{t}_c}^{1-\alpha}(X)|\calD^{lrn})\geq 1-\alpha$ a.s. 
        \item With probability greater that $1-3\delta$:
        \begin{equation}
            \label{eq:thme-adap}
            \hspace{-0.3cm}\EE\left[\lambda\left(C_{\hat{f},\hs,\hat{t}_c}^{1-\alpha}(X)\right)\Big|\calD^{lrn},\calD^{cal}\right] \leq \EE\left[\lambda\left(C_{f^*,s^*,t^*}^{1-\alpha}(X)\right)\right] + 4\psiSl + 2L\Big(\frac{1}{n_c+1}+\sqrt{\frac{\log(1/\delta)}{n_c+1}} + 2\phiFSl\Big)^\gamma.
        \end{equation}
    \end{enumerate}
\end{theorem}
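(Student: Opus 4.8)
The plan is to mirror the proof of Theorem~\ref{thme:main-constant}, now using Lemma~\ref{lem:scott-adapv2} in place of its Step~1 and adding a recalibration argument. Throughout, let $(\hat f,\hat s,\hat t)$ be the solutions of the learning problem~\eqref{eq:new-algo-adap} and $\hat t_c$ the calibration shift; the returned set is $C^{1-\alpha}_{\hat f,\hat s,\hat t_c}$, so the learned shift $\hat t$ is replaced by the conformal one $\hat t_c$.

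For the first claim I would condition on $\calD^{lrn}$. Once $\calD^{lrn}$ is fixed, $(\hat f,\hat s)$ is deterministic and $t\mapsto C_{\hat f,\hat s,t}(x)=[\hat f(x)-\hat s(x)-t,\hat f(x)+\hat s(x)+t]$ is a family of nested sets (Appendix~\ref{sec:nested}) with score $s_{\hat f,\hat s}(x,y)=|y-\hat f(x)|-\hat s(x)$. Since $\calD^{cal}$ and the test point are i.i.d.\ and independent of $\calD^{lrn}$, the scores $\{s_{\hat f,\hat s}(X_i,Y_i)\}_{i\in\calD^{cal}}\cup\{s_{\hat f,\hat s}(X,Y)\}$ are exchangeable given $\calD^{lrn}$, and $\hat t_c$ is their conservative empirical $(1-\alpha)$-quantile; the standard split-conformal argument \citep[Thm.~2.2]{lei2018distribution} (see also \citep[Prop.~1]{gupta2022nested} in the nested view) then gives $\IP(s_{\hat f,\hat s}(X,Y)\le\hat t_c\mid\calD^{lrn})\ge 1-\alpha$, i.e.\ $\IP(Y\in C^{1-\alpha}_{\hat f,\hat s,\hat t_c}(X)\mid\calD^{lrn})\ge 1-\alpha$, for a.e.\ $\calD^{lrn}$.

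For the second claim, the key observation is that $\lambda(C_{f,s,t}(x))=2\max\{s(x)+t,0\}$ and that $u\mapsto\max\{u,0\}$ is non-decreasing and $1$-Lipschitz, which gives
\begin{align*}
\EE\!\left[\lambda\!\left(C^{1-\alpha}_{\hat f,\hat s,\hat t_c}(X)\right)\!\Big|\calD^{lrn},\calD^{cal}\right]
&\le \EE\!\left[\lambda\!\left(C^{1-\alpha}_{\hat f,\hat s,\hat t}(X)\right)\!\Big|\calD^{lrn}\right]+2(\hat t_c-\hat t)_+\\
&\le \EE\!\left[\lambda\!\left(C^{1-\alpha}_{f^*,s^*,t^*}(X)\right)\right]+4\psiSl+2(\hat t_c-\hat t)_+ ,
\end{align*}
where the second inequality holds on an event $\calA_1$ of probability $\ge 1-2\delta$ by~\eqref{eq:lower-scott-app2-ADv2} of Lemma~\ref{lem:scott-adapv2}. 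It remains to bound $(\hat t_c-\hat t)_+$. Conditionally on $\calD^{lrn}$, $S:=|Y-\hat f(X)|-\hat s(X)$ is a fixed real random variable, and: (i) by a direct adaptation of the proof of Proposition~\ref{prop:upper-f-given} (DKW applied to the $n_c$ i.i.d.\ calibration scores, legitimate because $\hat f,\hat s$ are frozen by the conditioning) we get, on an event $\calA_2$ of probability $\ge 1-\delta$, $\hat t_c\le Q\!\big(1-\alpha+\tfrac{1}{n_c+1}+\sqrt{\tfrac{\log(1/\delta)}{n_c+1}};S\big)$; (ii) applying the quantile function of $S$ to the relaxed coverage bound~\eqref{eq:lower-scott-app-ADv2} of Lemma~\ref{lem:scott-adapv2}, on $\calA_1$, gives $\hat t\ge Q(1-\alpha-2\phiFSl;S)$. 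Assumption~\ref{ass:regularity} for $S=|Y-f(X)|-s(X)$, together with $\tfrac{1}{n_c+1}+\sqrt{\tfrac{\log(1/\delta)}{n_c+1}}\le r$ and $\phiFSl\le r$ (which, up to the harmless factor $2$, keep both quantile arguments inside $[1-\alpha-r,1-\alpha+r]$), then yields $(\hat t_c-\hat t)_+\le L\big(\tfrac{1}{n_c+1}+\sqrt{\tfrac{\log(1/\delta)}{n_c+1}}+2\phiFSl\big)^\gamma$ on $\calA_1\cap\calA_2$. A union bound gives $\IP(\calA_1\cap\calA_2)\ge 1-3\delta$, and plugging this estimate into the display proves~\eqref{eq:thme-adap}.

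The main obstacle is the bookkeeping of the recalibration: controlling $\hat t_c-\hat t$ requires both an upper bound on the conformal shift $\hat t_c$ and a lower bound on the learned shift $\hat t$, and the only available handle on $\hat t$ is the \emph{relaxed} and merely high-probability coverage guarantee~\eqref{eq:lower-scott-app-ADv2}, so the events $\calA_1,\calA_2$ must be tracked carefully through a union bound, and the conditional DKW argument (conditioning on $\calD^{lrn}$ so that $(\hat f,\hat s)$ is deterministic) must be invoked correctly. A secondary point is verifying that the two quantile arguments genuinely lie in the window where Assumption~\ref{ass:regularity} is imposed — hence the factor-$2$ caveat — and that the monotone/$1$-Lipschitz reduction correctly handles possibly degenerate (empty) prediction intervals when an adaptive width $\hat s(x)+\hat t_c$ is negative.
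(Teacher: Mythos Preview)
Your proof follows the same strategy as the paper's: invoke Lemma~\ref{lem:scott-adapv2} for the learning step, decompose the expected length as the learning term plus a $\hat t_c-\hat t$ correction, upper-bound $\hat t_c$ via a coverage upper bound and lower-bound $\hat t$ via~\eqref{eq:lower-scott-app-ADv2}, then apply Assumption~\ref{ass:regularity} and a union bound. The only noteworthy differences are that the paper obtains the coverage upper bound $\IP(S\le\hat t_c\mid\calD^{lrn},\calD^{cal})\le 1-\alpha+\tfrac{1}{n_c+1}+\sqrt{\tfrac{\log(1/\delta)}{n_c+1}}$ by citing \citet[Proposition~24]{humbert2024marginal} rather than a DKW adaptation of Proposition~\ref{prop:upper-f-given} (which is where the exact constants $\tfrac{1}{n_c+1}+\sqrt{\tfrac{\log(1/\delta)}{n_c+1}}$ come from, so your DKW route would naturally yield the slightly different $\tfrac{1-\alpha}{n_c}+\sqrt{\tfrac{\log(2/\delta)}{2n_c}}$), and that the paper writes the length decomposition as the equality $\EE[\lambda(C_{\hat f,\hat s,\hat t_c})\mid\cdot]=\EE[\lambda(C_{\hat f,\hat s,\hat t})\mid\cdot]-2\hat t+2\hat t_c$ instead of your more careful $1$-Lipschitz bound with $2(\hat t_c-\hat t)_+$.
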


\begin{proof}
    Like in Theorem~\ref{thme:main-constant}, the first point of Theorem~\ref{thme:main-adap}, on the almost sure coverage guarantee, is a classical result of the conformal prediction literature.

    We start the proof of the second point by recalling that since the distribution of $Y$ is assumed atomless, we have with probability greater than $1-\delta$:
    \begin{equation}
        \label{eq:upper-classic-appv2}
        \IP( |Y-\hat{f}(X)| -\hs(X) \leq \hat{t}_c \big|\calD^{lrn},\calD^{cal})\leq 1-\alpha + \frac{1}{n_c+1} +\sqrt{\frac{\log(1/\delta)}{n_c+1}}\;.
    \end{equation}
    See Section 2.1 and Proposition 24 in \citet{humbert2024marginal} for details on this result.
    Like in the proof of Theorem~\ref{thme:main-constant}, we have $\IP(\{\eqref{eq:upper-classic-appv2}\}\cap\{\eqref{eq:lower-scott-app-ADv2}\}\cap\{\eqref{eq:lower-scott-app2-ADv2}\})\geq 1-3\delta$, and it suffices to show that if \eqref{eq:upper-classic-appv2}, \eqref{eq:lower-scott-app-ADv2} and \eqref{eq:lower-scott-app2-ADv2} are true, we have our final upper-bound.

    We have $\EE\left[\lambda\left(C_{\hat{f},\hs,\hat{t}_c}^{1-\alpha}(X)\right)\Big|\calD^{lrn},\calD^{cal}\right] = \EE\left[\lambda\left(C_{\hat{f},\hs,\hat{t}}^{1-\alpha}(X)\right)\Big|\calD^{lrn}\right] - 2\hat{t} + 2\hat{t}_c$. With~\eqref{eq:lower-scott-app2-ADv2} we have an upper-bound on $\EE\left[\lambda\left(C_{\hat{f},\hs,\hat{t}}^{1-\alpha}(X)\right)\Big|\calD^{lrn}\right]$, and it remains to show that $\hat{t}_c - \hat{t} \leq L\left(\frac{1}{(n_{c}+1)^\gamma} + 2\phiFl^\gamma\right)$.

Applying the quantile function $Q(\cdot; |Y-\hat{f}(X)|-\hs(X)_{|\calD^{lrn}})$ on~\eqref{eq:upper-classic-appv2}, we get that $\hat{t}_c \leq Q(1-\alpha + \frac{1}{n_c+1}+\sqrt{\frac{\log(1/\delta)}{n_c+1}}; |Y-\hat{f}(X)|-\hs(X)_{|\calD^{lrn}})$. Similarly, applying it on~\eqref{eq:lower-scott-app-ADv2} gives $\hat{t}\geq Q(1-\alpha - 2\phiFSl; |Y-\hat{f}(X)| - \hs(X))_{|\calD^{lrn}}$. Hence, thanks to the regularity condition, we have: 

    \begin{align*}
        \hat{t}_c - \hat{t} & \leq L\Big(\frac{1}{n_c+1}+\sqrt{\frac{\log(1/\delta)}{n_c+1}} + 2\phiFSl\Big)^\gamma \; .
    \end{align*}
    
     


\end{proof}

\section{Detailed implementation of the empirical $(1-\alpha)$-QAE minimization}\label{sec:optim_append}

As explain in Section \ref{sec:optim_emp_QAE}, to solve Problem \eqref{eq:optim_quantile} we use a gradient descent strategy. However, because the empirical quantile is not differentiable, we replace $\widehat{Q}$ in Problem \eqref{eq:optim_quantile} by the following smooth approximation:
\begin{equation*}
	\widetilde{Q}_{\varepsilon}(q; (\ell(\theta;Z_i))_{i\in\calD^{lrn}}) = \inf \{t \,:\, \widetilde{F}_{\varepsilon}(t, \theta) \geq q \} \; ,
\end{equation*}
where $\widetilde{F}_{\varepsilon}$ is an approximation of the empirical distribution of the loss-values $(\ell(\theta;Z_i))_{i\in\calD^{lrn}}$ defined for $\varepsilon > 0$ by
\begin{align*}
	\widetilde{F}_{\varepsilon}(t, \theta) = \sum_{i\in\calD^{lrn}} \Gamma_{\varepsilon}(\ell(\theta;Z_i) - t) \; ,
\end{align*}
with
\begin{align*}
	\Gamma_{\varepsilon}(z) = \left\{
	\begin{array}{ll}
		1 & z \leq - \varepsilon \\
		\gamma_{\varepsilon}(z) & -\varepsilon < z < \varepsilon \\
		0 & z \geq \varepsilon
	\end{array}
	\right. \; ,
\end{align*}
and $\gamma_{\varepsilon} : [-\varepsilon, \varepsilon] \longrightarrow [0, 1]$ a symmetric and strictly decreasing function such that it makes $\Gamma_{\varepsilon}$ differentiable. One possible choice for $\gamma_{\varepsilon}$ is given in \citep[Eq. (2.6)]{pena2020solving}:
\begin{equation}\label{eq:gamma_epsi}
	\gamma_{\varepsilon}(z) = \dfrac{15}{16}\left(-\dfrac{1}{5} \left(\dfrac{z}{\varepsilon}\right)^5 + \dfrac{2}{3}\left(\dfrac{z}{\varepsilon}\right)^3 - \dfrac{z}{\varepsilon} + \dfrac{8}{15} \right) \; .
\end{equation}

For a given $q$ and $\varepsilon > 0$, under some assumptions on the loss (see \citet{pena2020solving}), the implicit function theorem implies that:
\begin{align}\label{eq:grad_epsi}
	\nabla_{\theta} [\widetilde{Q}_{\varepsilon}(q; (\ell(\theta;Z_i))_{i\in\calD^{lrn}})] &= \dfrac{\sum_{i\in\calD^{lrn}} \Gamma'_{\varepsilon}(\ell(\theta;Z_i) - \widetilde{Q}_{\varepsilon}(q; (\ell(\theta;Z_i))_{i\in\calD^{lrn}})) \cdot \nabla_{\theta} \ell(\theta;Z_i)}{\sum_{i\in\calD^{lrn}} \Gamma'_{\varepsilon}(\ell(\theta;Z_i) - \widetilde{Q}_{\varepsilon}(q; (\ell(\theta;Z_i))_{i\in\calD^{lrn}}))} \; ,
\end{align}
where $\nabla_{\theta}$ denotes the gradient with respect to $\theta$ and $\Gamma'$ is the differential of $\Gamma$. We can therefore use a gradient descent algorithm to solve an approximation of the QAE Problem \eqref{eq:optim_quantile} given by:
\begin{align*}
	&\min_{\theta} \; \widetilde{Q}_{\varepsilon}(1-\alpha;\{\ell(\theta;Z_i)\}_{i\in\calD^{lrn}}) \; .
\end{align*}
To this end, starting from an initial guess $\widetilde{\theta}_1$, we simply make the iterates:
\begin{align*}
	\widetilde{\theta}_{k+1} = \widetilde{\theta}_{k} - \eta_k \nabla_{\theta} [\widetilde{Q}_{\varepsilon}(1-\alpha; (\ell(\widetilde{\theta}_k;Z_i))_{i\in\calD^{lrn}})] \; ,
\end{align*}
where $\eta_k > 0$ is the step-size. The full procedure is summary in Algorithm \ref{alg:min_quantile} when $\gamma_{\varepsilon}$ is an in Eq. \eqref{eq:gamma_epsi}.
\begin{algorithm}
	\caption{Gradient descent to solve the QAE problem (step 1 of \method~ and \methodAD)}
	\label{alg:min_quantile}
	\begin{algorithmic}[1]
				\State \textbf{Inputs:} $\varepsilon$, $\widetilde{\theta}_1$, $n_{iter}$, $(\eta_k)_{1 \leq k \leq n_{iter}}, \alpha$
				\For{$k = 1, \ldots, n_{iter}$}
				\State $A \gets \widetilde{Q}_{\varepsilon}(1-\alpha; (\ell(\widetilde{\theta}_k;Z_i))_{i\in\calD^{lrn}}))$
				\For{$i \in \calD^{lrn}$}
				\State $B_i \gets \Gamma'_{\varepsilon}(\ell(\widetilde{\theta}_{k};Z_i) - A) = -\dfrac{15}{16}\left(\Big(\varepsilon^2 - (\ell(\widetilde{\theta}_{k};Z_i) - A)^2\Big)^2/\varepsilon^5\right) \cdot \1\{-\varepsilon < (\ell(\widetilde{\theta}_{k};Z_i) - A) < \varepsilon\}$
				\State $C_i \gets \nabla_{\theta} \ell(\theta;Z_i)$
				\EndFor
				\State $\widetilde{\theta}_{k+1} \gets \widetilde{\theta}_{k} - \eta_k \cdot \sum_{i} (B_i C_i) / \sum_{i} B_i$
				\EndFor
				\State \textbf{Output:} $\widetilde{\theta}_{n_{iter}+1}$
			\end{algorithmic}
\end{algorithm}

\begin{remark}
	In our setting, $\ell$ is not differentiable because of the absolute value function. In practice, we therefore replace the gradient by a subdifferential (this is what we do in the experiments). Another possibility could be to replace the absolute value function with a smooth approximation, such as the Huber loss \citep{huber1964}. Furthermore, as also done in \citet{luo2022empirical}, in Eq \eqref{eq:grad_epsi} we replace $\widetilde{Q}_{\varepsilon}(q; (\ell(\theta;Z_i))_{i\in\calD^{lrn}})$ by the empirical quantile for computation efficiency.
\end{remark}

\begin{remark}(Link with other formulations)
	Problem \eqref{eq:optim_quantile} is in fact similar to the \textit{single chance constraint problem} (see e.g. \citep{curtis2018sequential}). It can also be reformulated as the following  bi-level optimization problem: 
	\begin{align*}
		\min_{\theta} &\; t(\theta)
		\quad \text{s.t.} \quad t(\theta) = \arg\min_{t} \sum_{i\in\calD^{lrn}} \rho_{1-\alpha}(\ell(\theta;Z_i) - t) \; .
	\end{align*}
	where $\rho_{1-\alpha}$ is the pinball loss. Indeed, from \cite{koenker1978regression, biau2011sequential} we know that $t(\theta) = \widehat{Q}(1-\alpha;\{\ell(\theta;Z_i)\}_{i\in\calD^{lrn}})$.
\end{remark}
\section{Additional results} \label{sec:add_xp}

\subsection{Synthetic data}\label{sec:add_xp_synth}
\paragraph{Experimental setup details for Section \ref{sec:xpADEffort}:}
During the learning step of \methodAD, we solve the $(1-\alpha)$-QAE Problem \eqref{eq:QAE} using the gradient descent strategy of Section \ref{sec:optim_emp_QAE}. The smoothing parameter $\varepsilon$ is set to $0.1$, $n_{iter}=1000$, and the step-size sequence is $\{(1/t)^{0.6}\}^{n_{iter}}_{t=1}$. The space of research $\calF$ is restricted to the space of linear functions. The function $\hat{s}(\cdot)$ (second step of \methodAD) and the two quantile regression functions of CQR are learned by using a Random Forest (RF) quantile regressor, implemented in the Python package sklearn-quantile\footnote{\href{https://sklearn-quantile.readthedocs.io}{https://sklearn-quantile.readthedocs.io}}. The function $\hat{\sigma}$ in LW-CP is learned using the RF regression implementation of scikit-learn \citep{scikit-learn}. Each time, the max-depth of the RF is set to $5$ and the other parameters are the default ones of the sklearn-quantile and scikit-learn packages.

\paragraph{Additional experiments:} We now present additional results on synthetic data:

\begin{itemize}
	
	\item In Figure \ref{fig:illustr_synth_coverage}, we display the coverage obtained on the scenarios of Section \ref{sec:xpEffort}. We see that, as expected, all methods return sets with average coverage of $1-\alpha=0.9$ (white circle) regardless of the distribution of the noise.
	\item In figure \ref{fig:illustr_synth_NN}, we present additional results obtained when the base predictor is a Networks (NNs) and not a linear regressor as made in the main paper. We consider the model $Y = X^2 + \calE$ with $\calE$ following the same distributions as presented in Section \ref{sec:xpEffort}. In detail, we learn NNs with one hidden layer of size $10$ and with a ReLU activation function. In \method, the NN is learned using the gradient descent strategy of Section \ref{sec:optim_emp_QAE}. The smoothing parameter $\varepsilon$ is set to $0.1$, $n_{iter}=1000$ and the step-size sequence is $\{(1/t)^{0.6}\}^{n_{iter}}_{t=1}$. The gradient with respect to the NN weights involved in the gradient descent is calculated using automatic differentiation. For split CP, the NN is learned using an ADAM optimizer and the loss is either a Huber loss (robust NN) or a least squares loss. Again, in all scenarios, \method~returns marginally valid sets in general smaller than those of the split CP method. This confirms that learning a model via the $(1-\alpha)$-QAE problem is a better way of obtaining small prediction sets during the calibration step.
	
	\item In Figure \ref{fig:illustr_synth_adEffort_coverage}, we display the coverage obtained on the scenarios of Section \ref{sec:xpADEffort} when using \methodAD. We see again that, as expected, all methods return sets with average coverage of $1-\alpha=0.9$ (white circle) regardless of the distribution of the noise. Finally, Figure \ref{fig:illustr_synth_adEffort_example} shows examples of prediction sets returned by \methodAD, Locally weighted CP (LW-CP) and CQR when the noise is Gaussian.
\end{itemize}

\begin{figure*}[h!]
	\centering
	\includegraphics[width=.4\linewidth]{./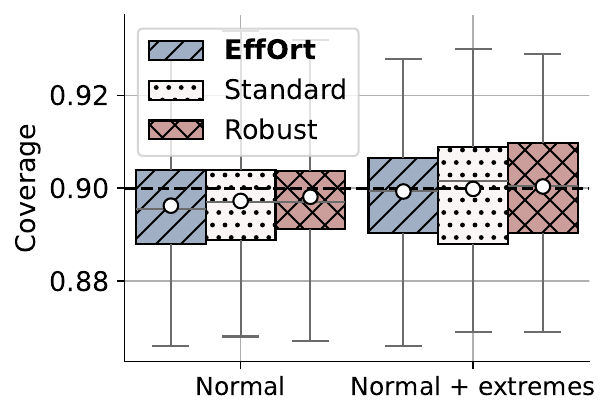}
	\includegraphics[width=.4\linewidth]{./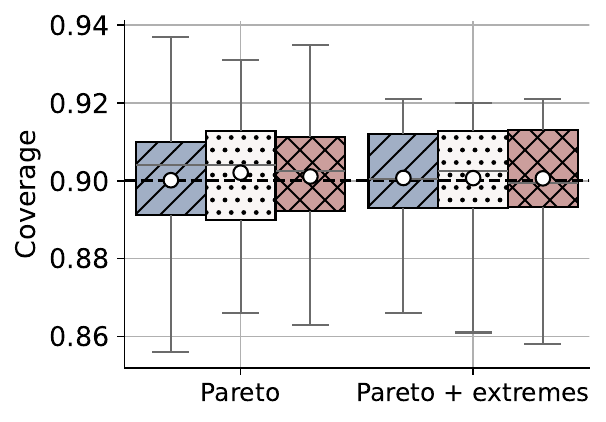}
	\caption{Synthetic data: Boxplots of the $50$ empirical coverages obtained by evaluating \method~(see Section \ref{sec:xpEffort}). The white circle corresponds to the mean.} 
	\label{fig:illustr_synth_coverage}
\end{figure*}

\begin{figure*}[h!]
	\centering
	\includegraphics[width=.4\linewidth]{./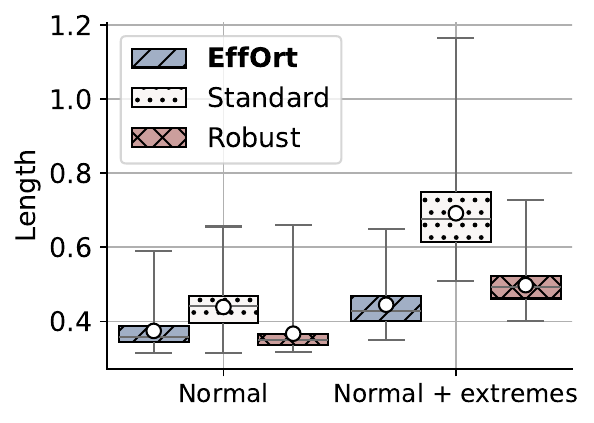}
	\includegraphics[width=.4\linewidth]{./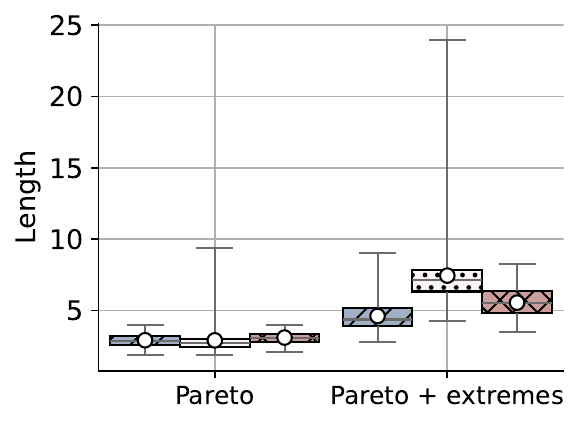}
	\includegraphics[width=.4\linewidth]{./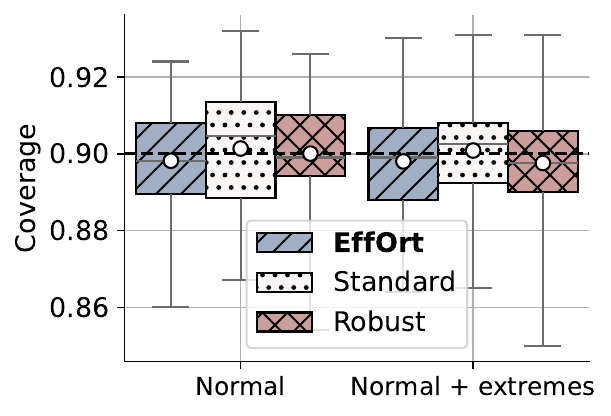}
	\includegraphics[width=.4\linewidth]{./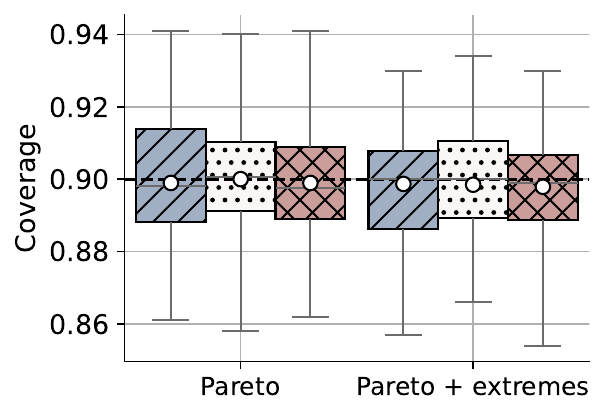}
	\caption{Synthetic data: 
	Boxplots of the $50$ empirical expected lengths (top) and coverages (bottom) obtained by evaluating \method~(see Section \ref{sec:xpEffort}). The white circle corresponds to the mean.} 
	\label{fig:illustr_synth_NN}
\end{figure*}

\begin{figure*}[h!]
	\centering
	\includegraphics[width=.4\linewidth]{./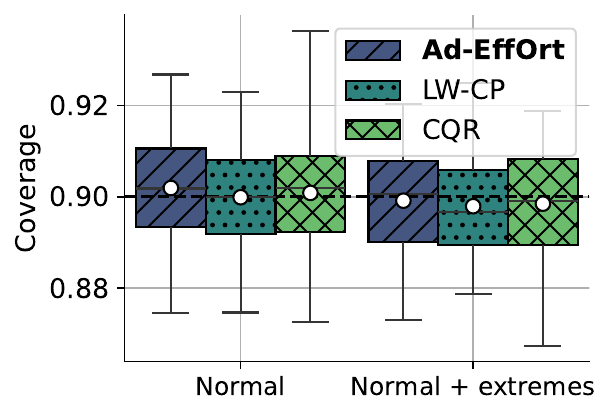}
	\includegraphics[width=.4\linewidth]{./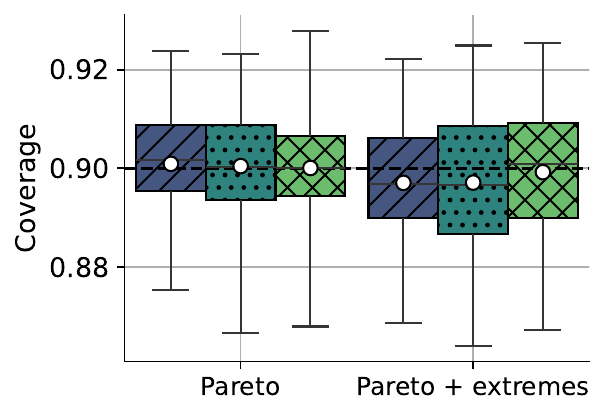}
	\caption{Synthetic data: Boxplots of the $50$ empirical coverages obtained by evaluating \methodAD~(see Section \ref{sec:xpADEffort}). The white circle corresponds to the mean.} 
	\label{fig:illustr_synth_adEffort_coverage}
\end{figure*}

\begin{figure*}[h!]
	\centering
	\includegraphics[width=.3\linewidth]{./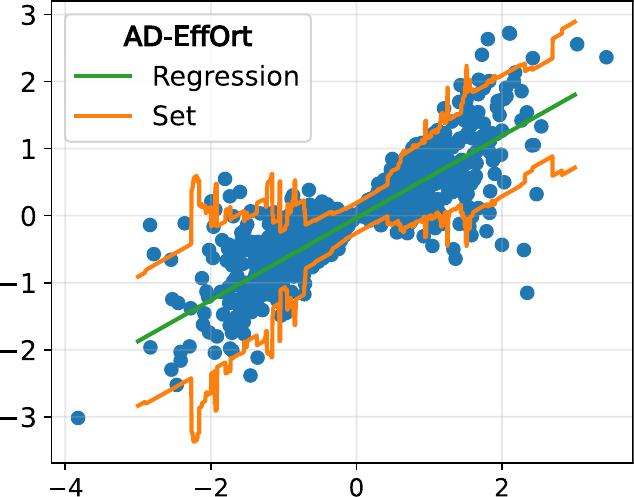}
	\includegraphics[width=.3\linewidth]{./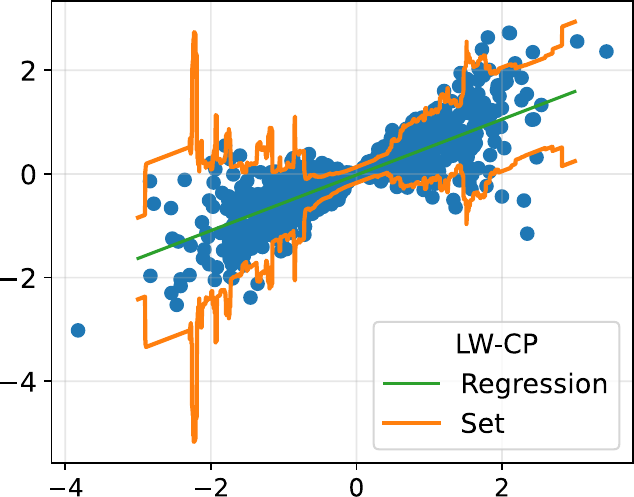}
	\includegraphics[width=.3\linewidth]{./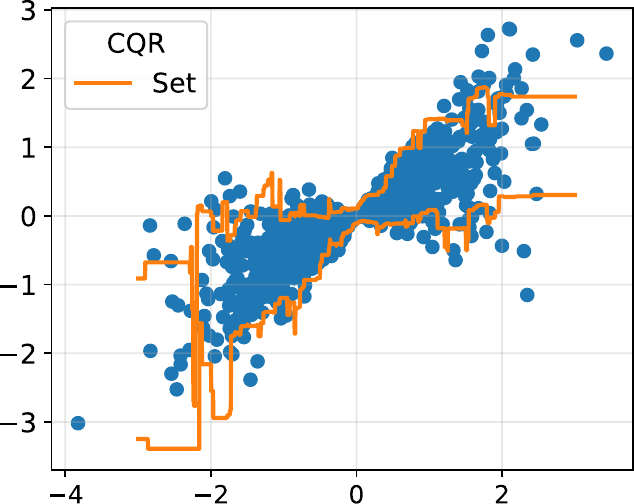}
	\caption{Synthetic data: Example of sets returned by \methodAD~(left), LW-CP (middle), and CQR (right).} 
	\label{fig:illustr_synth_adEffort_example}
\end{figure*}

\subsection{Real data}
\label{app:real-data}
We finally compare \methodAD~with Locally Weighted CP (LW-CP) and CQR on the following public-domain real data sets also considered in e.g. \citep{romano2019conformalized}: abalone \citep{abalone_1}, boston housing (housing) \citep{harrison1978hedonic}\footnote{\href{https://www.cs.toronto.edu/~delve/data/boston/bostonDetail.html}{https://www.cs.toronto.edu/~delve/data/boston/bostonDetail.html}}, and concrete
compressive strength (concrete) \citep{yeh1998modeling}.\footnote{\href{http://archive.ics.uci.edu/dataset/165/concrete+compressive+strength}{http://archive.ics.uci.edu/dataset/165/concrete+compressive+strength}} We randomly split each data set $10$ times into a training set, a calibration set and a test set of respective "size" $40\%$, $40\%$, and $20\%$. The training and calibration sets are used to apply \methodAD, LW-CP, and CQR, and the test set to compute the coverage and length metrics. For \methodAD~and LW-CP the base prediction function $\hf$ is a Neural-Network (NN) with one hidden layer of size $10$ and a ReLU activation function. The function $\hs$ in the step 2 of \methodAD~ and the two quantile regression functions of CQR are learned with a Random Forest (RF) quantile regressor, implemented in the Python package sklearn-quantile. The function $\hat{\sigma}$ in LW-CP is learned using the RF regression implementation of scikit-learn \citep{scikit-learn}. Each time, the max-depth of the RF is set to $5$ and the other parameters are the default ones of the sklearn-quantile and scikit-learn packages. To illustrate the robustness of our approach, we finally add, in all the data sets, $5\%$ of outliers to the values to be predicted, using a Gaussian distribution whose mean is equal to 2 times the maximum value of the original data.

Figure \ref{fig:real_data} displays the length and the normalized length (i.e. the length divided by the maximal length obtained with the three methods in the $10$ splits) obtained on each data set. We can see that \methodAD~is competitive, as it generally returns marginally valid sets (see figure \ref{fig:real_data_cov} for coverage) of smaller or similar size to at least one of the other two methods. This is in line with the results obtained on synthetic data (Section \ref{sec:xps} and Appendix \ref{sec:add_xp_synth}). Note also that the variability of the coverage metric (represented by the length of the boxes in Figure \ref{fig:real_data_cov}) is much smaller for \methodAD~than LW-CP. Overall, these results show that \methodAD~is empirically competitive with the main existing CP methods, while enjoying a strong theoretical grounding. It is therefore a method of choice for all practical applications.

\begin{figure*}[h!]
	\centering
	\includegraphics[width=.4\linewidth]{./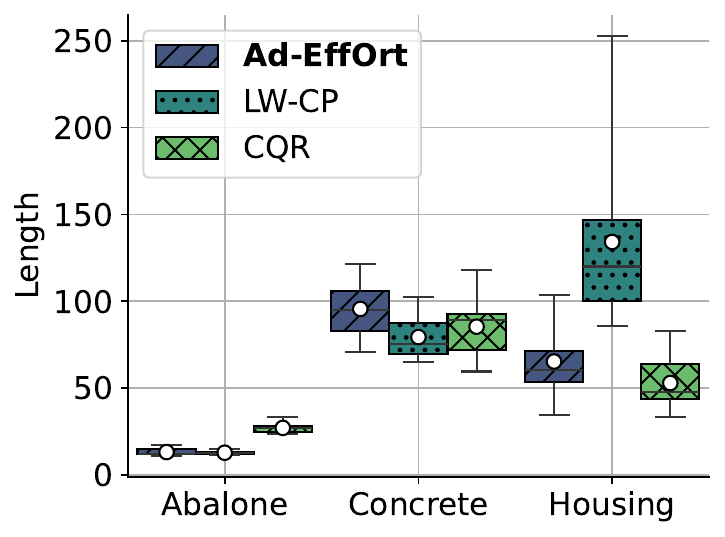}
	\includegraphics[width=.4\linewidth]{./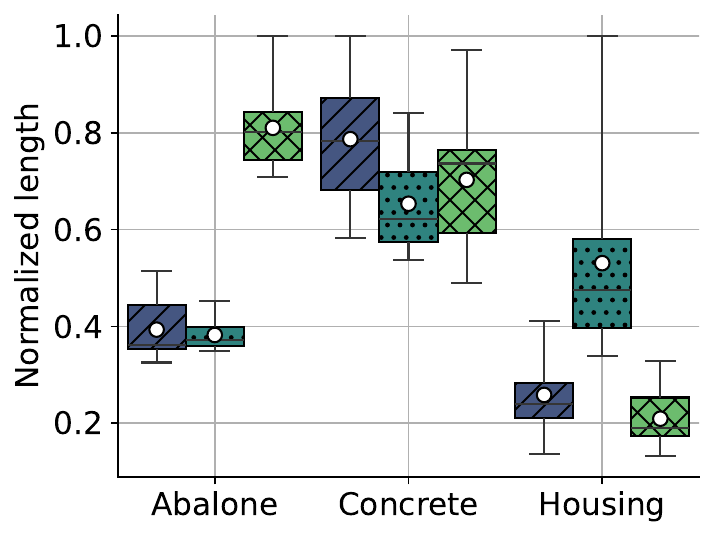}
	\caption{Real data: Boxplots of the lengths (left) and normalized lengths (right) obtained with \methodAD, LW-CP, and CQR on real data sets. The white circle corresponds to the mean.} 
	\label{fig:real_data}
\end{figure*}

\begin{figure*}[h!]
	\centering
	\includegraphics[width=.4\linewidth]{./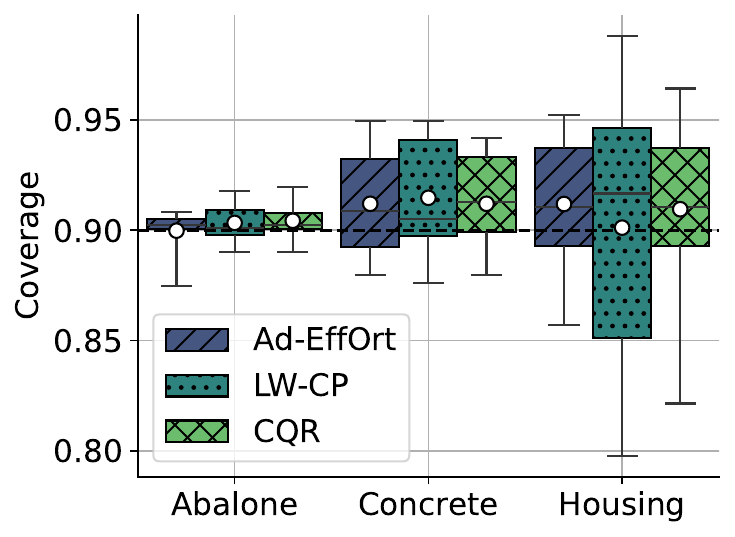}
	\caption{Real data: Boxplots of the coverages obtained with \methodAD, LW-CP, and CQR on real data sets. The white circle corresponds to the mean.} 
	\label{fig:real_data_cov}
\end{figure*}

%
%

\end{document}